\documentclass[letterpaper, 10 pt, conference]{ieeeconf}  

\IEEEoverridecommandlockouts                              

\usepackage{graphicx}      
\usepackage{caption}
\usepackage{subcaption}
\usepackage{amsmath}
\usepackage{amssymb}
\usepackage{xcolor}
\usepackage{booktabs}

\usepackage{import}

\newtheorem{lemma}{Lemma}

\newtheorem{remark}{Remark}

\usepackage{multirow}
\usepackage{comment}
\usepackage{algorithm}
\usepackage{algorithmic}

\usepackage{mathtools, cases}
\usepackage{cuted} 
\usepackage{placeins}
\usepackage{bm}

\makeatletter
\let\NAT@parse\undefined
\makeatother
\usepackage[hidelinks]{hyperref} 

\title{Switched Turn-based Adaptive Source Seeking Strategy using Estimation and Information-driven Direction of Improvement}

\author{Shubhra Banerjee$^{1}$ and Satadal Ghosh$^{2}$%
\thanks{$^{1}$Shubhra Banerjee is a Ph.D. Scholar in the Department of Aerospace Engineering, Indian Institute of Technology Madras, Chennai 600036, India.
{\tt\small ae21d022@smail.iitm.ac.in}}%
\thanks{$^{2}$Satadal Ghosh is an Associate Professor, Department of Aerospace Engineering, Indian Institute of Technology Madras, Chennai 600036, India.
{\tt\small satadal@iitm.ac.in}}%
}

\begin{document}

\maketitle
\thispagestyle{empty}
\pagestyle{empty}


\begin{abstract}
Source seeking is a fundamental problem in applications such as gas leak localization, radiation monitoring, and environmental surveillance, where the source location must be inferred from noisy scalar measurements. In agent-based settings, estimation and motion are tightly coupled: the trajectory influences measurement quality, while measurements refine the source estimate.

This paper proposes a switched turn-based adaptive source-seeking strategy that integrates Extended Kalman Filter (EKF) estimation with Fisher Information Matrix (FIM)-based direction selection within a loop-based motion framework. Measurements are accumulated over each loop, and direction updates are performed at loop boundaries using both estimation uncertainty and predicted information gain, enabling a principled balance between exploration and exploitation.

Theoretical analysis and simulation results for stationary and moving sources show that the proposed joint strategy achieves lower estimation error and more effective distance reduction compared to purely information-driven and estimate-driven approaches, particularly under dynamic conditions.
\end{abstract}


\section{Introduction}


Source seeking refers to the problem of estimating the unknown origin of a signal field from measurements by on-board sensors of unmanned vehicle(s) a.k.a. agent(s) at different locations and steering the agent toward the source region. The source location is not directly observable and needs to be inferred from noisy measurements collected during the motion of agent(s). This problem plays a critical role in several real-world applications, such as determining the deepest regions of a water body~\cite{burian1996gradient}, locating sources of pollutants~\cite{yungaicela2018UAV}, and radiation~\cite{mascarich2018radiation}. 

The problem of robotic source seeking has been addressed in literature following several methods that include control-theoretic ones, estimation and information-theoretic ones, and geometric ones. Among the control-theoretic methods, extremum-seeking control methods~\cite{ariyur2003real} have been extensively utilized, where small motion perturbations are used to extract gradient information from signal variations and guide the agent toward the source. Different implementations following this method include schemes that modulate both speed and heading~\cite{ghods2010speed} as well as approaches that regulate only the heading while maintaining constant speed~\cite{cochran2009nonholonomic}, with stochastic variants proposed to improve robustness under uncertainty~\cite{lin2017stochastic}. Besides, an adaptive feedback control-based source seeking scheme was presented in~\cite{greiff2021target} by adjusting control inputs using online source estimates to handle uncertainties, while a sliding mode control-based strategy has been presented in~\cite{matveev2011navigation} for robust convergence toward the source. 
While control-theoretic methods are mathematically well-grounded, their command structure often involves states that are hard to measure and estimate. Also, in some of the formulations, signum-based commands lead to very sharp turn in the robot's motion. These problems are mitigated by leveraging geometry-based strategies that focus on simple and implementable motion patterns. 

Two geometric strategies, named as 'Full Circle' and 'Half Circle', were presented in \cite{kashyap2017pursuing}, which inherently provided the opportunity of zigzagging for effective exploration, and also as the direction estimate to the source improved over nominal circular loops, it was exploited simultaneously. This notion was subsequently further advanced in formulating switched turn strategies, in which the agent follows a nominal circular loop unless a turn switching event occurs in that loop. The turn switching event was first characterized as passage through the maximum signal point of the loop in Maxima-Turn-Switching (MTS) framework \cite{upasana2019maxima}, \cite{upasana2019collaborative}, and crossing the local gradient estimate direction in Gradient-Direction-Turn-Switching (GDTS) \cite{sp2022gradient}. A local gradient estimate-based augmentation of MTS was also presented in \cite{kamthe2020gradient}. 
These strategies are computationally efficient and produce smooth trajectories suitable for real systems. However, in most of the control-theoretic methods except \cite{lin2017stochastic} and geometry-based methods uncertainties associated with scalar field, on-board sensing and estimation are usually not directly included.

In contrast, estimation-based and information-theoretic methods leverage underlying probabilistic and / or statistical properties of the uncertainties involved in source localization. An info-taxis scheme was presented in~\cite{vergassola2007infotaxis}, following which robot actions were chosen such that information about the source location gets improved, thus making them effective even when measurements are limited or unreliable. In \cite{rolf2020successive}, a biased sampling-based iterative motion planning of a robot was presented with major weightage on the updated confidence interval of source location estimate. EKF-based source location estimate and gradient-following on the trace of FIM were presented in \cite{zhang2021source}, \cite{zhang2023distributed} for a multi-robot source seeking problem. Particle filter was leveraged in \cite{hayes2002distributed,neumann2013gas} to explicitly account for uncertainty and nonlinear measurement. Trajectory with maximal information gain was planned for source localization using model predictive control \cite{atanasov2014information} and sampling-based method \cite{hollinger2014sampling}. While these approaches are statistically well-founded, they often command highly zigzagging robot trajectories, which are kinodynamically infeasible. 


In order to obviate the limitations of different methodologies, in this paper, a turn switching strategy is presented, in which the agent follows nominal circular loops, while turn switching event gets triggered as the agent crosses a specific direction in that loop. Three such directions are explored - one of them is toward the EKF-based source estimate, while the second one is toward a direction governed by the gradient of trace of FIM (indicating maximum information gain), and the last one being a adaptive linear combination of the first two directions. The adaptive weights facilitate balanced exploration and exploitation strategies during this source seeking. Thus, the amalgamation of geometric method with estimation and information-based methods is the salient feature of the source localization method presented in this paper. Note that leveraging the switched turn strategy over nominal circular loops in the agent's motion, in one hand, ensures kino-dynamic feasibility of the agent's motion as it progresses toward the source, and on the other hand, it also ensures an inherent feature of systematic zigzagging, thereby facilitating the exploration process. Moreover, the effective use of EKF-based source estimate direction and FIM-based direction of information gain in triggering the turn switching event in any nominal loop facilitates efficient handling the uncertainties involved in environment and sensing. Theoretical analysis and simulation results are presented to demonstrate the effectiveness of the proposed source seeking approach for both stationary and moving sources.

The remainder of this paper is organized as follows. 
Section~\ref{sec:problem_formulation} presents the problem formulation, including the source, agent, and measurement models. 
Section~\ref{sec:methodology} describes the proposed loop-based source-seeking strategy and different direction update mechanisms along with the theoretical analysis. 
Section~\ref{sec:results} provides simulation results and comparative analysis for different source motion. 
Finally, Section~\ref{sec:conclusion} provides a few concluding remarks.

\section{Problem Formulation}
\label{sec:problem_formulation}

\begin{figure}[h]
    \centering
    \includegraphics[width=0.7\columnwidth]{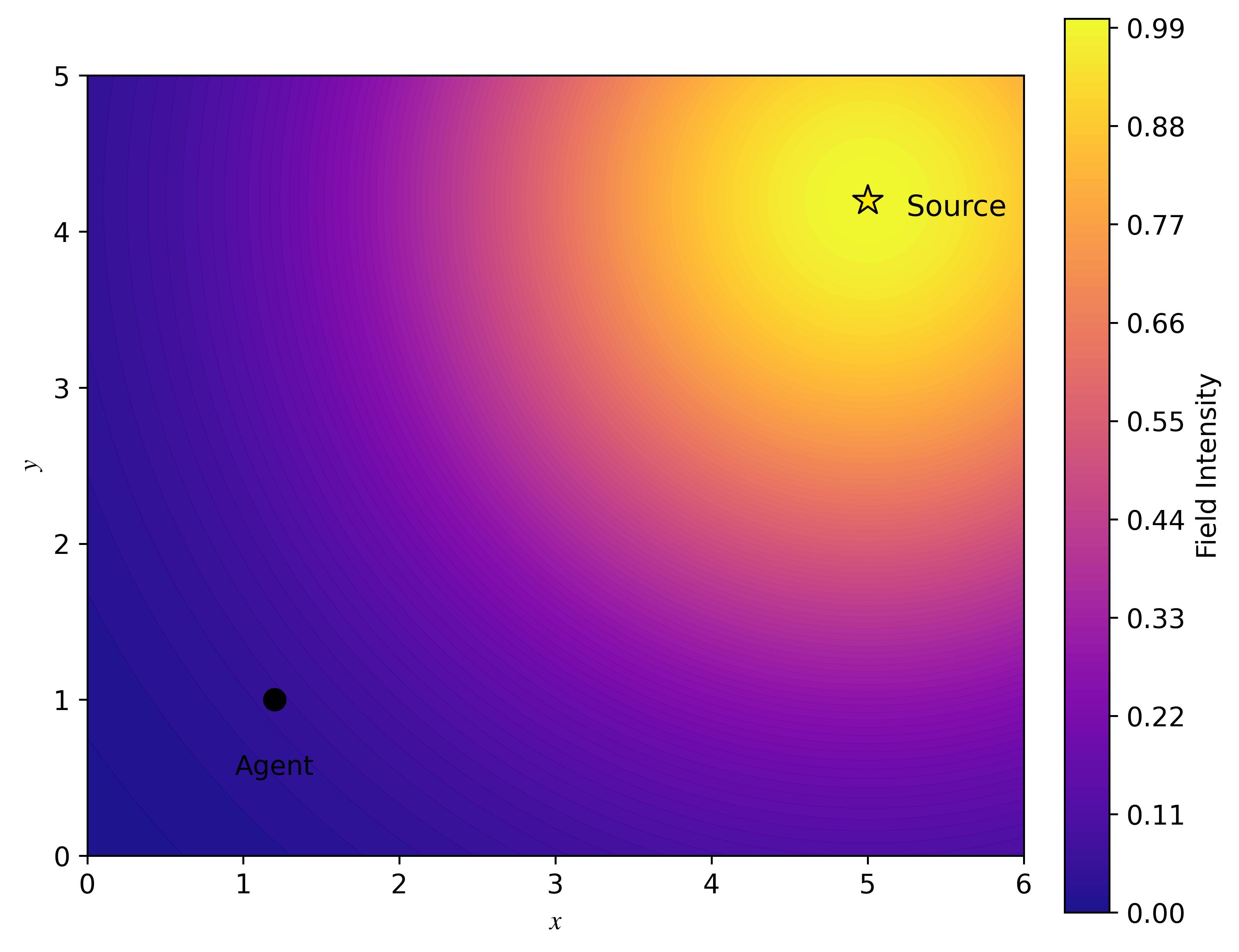}
    \caption{Planar source-seeking scenario. A sensor-mounted agent operates in a two-dimensional workspace and collects scalar field measurements generated by an unknown source.}
    \label{fig:problem_formulation}
\end{figure}

As illustrated in Fig. \ref{fig:problem_formulation}, This paper considers a single agent operating in a planar workspace to localize a signal source, which is prior unknown, using noisy scalar measurements by on-board sensors collected during the agent's motion.

\subsection{Source and Its Generated Scalar Field}
Let $\mathbf{s}(t)=[s_x(t)\:\:s_y(t)]^T \in \mathbb{R}^2$ denote the true source location at time $t$. The source may be stationary or time-varying. 
The source evolution is modeled in general form as
\begin{equation}
\dot{\mathbf{s}}(t) = f(\mathbf{s}(t), t) + \mathbf{w}(t),
\label{eq:source_motion}
\end{equation}

where $f(\cdot)$ defines the source kinematics and is assumed to be continuously differentiable, and $\mathbf{w}(t) \sim \mathcal{N}(\boldsymbol{\mu}, \mathbf{Q})$ represents Gaussian process noise with mean $\boldsymbol{\mu}$ and covariance $\mathbf{Q}$. The structure of $f(\cdot)$ is assumed known, while the true trajectory remains unknown.

The source is considered as generating a smooth scalar field, whose intensity attains a maximum at $\mathbf{s}(t)$ and decreases with distance from the source. Here, the spatial distribution of the signal field is modeled as a smooth symmetric and radially monotonically decreasing function centered at the source location, given by
\begin{equation}
h(\mathbf{x}(t), \mathbf{s}(t))
=
A \exp\left(-\frac{\|\mathbf{x}(t) - \mathbf{s}(t)\|^2}{2\sigma^2}\right),
\label{eq:field_model}
\end{equation}
where $A > 0$ denotes the source intensity and $\sigma > 0$ denotes the spatial spread of the field. 
\subsection{Agent Motion Model}

The agent position at time $t$ is denoted as $\mathbf{x}(t) =[x(t)\:\:y(t)]^T\in \mathbb{R}^2$ with heading angle $\psi(t) \in \mathbb{R}$.
The agent is considered to follow unicycle kinematics with forward speed $u(t)>0$ and turn rate $\omega(t)$.
\begin{align}
\dot{\mathbf{x}}(t) =
\begin{bmatrix}
u(t)\cos\psi(t) \\
u(t)\sin\psi(t)
\end{bmatrix};\quad \dot{\psi}(t) = \omega(t),
\label{eq:agent_motion}
\end{align}
\subsection{Measurement Model}

At time $t$, the agent obtains a scalar measurement
\begin{equation}
z(t) = h(\mathbf{x}(t), \mathbf{s}(t)) + v(t),
\label{eq:measurement}
\end{equation}
where $h(\cdot)$ as given in \eqref{eq:field_model} is a nonlinear function that models the spatial field generated by the source. The structure of $h(\cdot)$ is considered as known, however $\mathbf{s}(t)$ is unknown \textit{a-priori}.  And, $v(t) \sim \mathcal{N}(\mu_v, R)$ represents Gaussian measurement noise with mean $\mu_v$ and variance $R$.

\subsection{Problem Objective}

Given the source motion model in \eqref{eq:source_motion} and the measurement model in \eqref{eq:measurement}, the objective is to devise a kino-dynamically feasible motion strategy for the agent such that it steers the agent toward a small neighborhood about the source based on estimate of source location and information gain direction obtained from on-board sensor measurements.



\textcolor{black}{The agent motion is regulated using the current estimate $\hat{\mathbf{s}}$, the associated uncertainty, and an information measure derived from the Fisher Information Matrix, ensuring progressive improvement in localization performance.}

\section{Methodology}
\label{sec:methodology}
To address the problem, a loop-based exploration strategy is proposed, wherein the agent gathers spatially distributed measurements by traversing along constant-curvature trajectories. By accumulating measurements over each loop, high-frequency noise effects are attenuated, yielding more reliable directional information than instantaneous sampling. This underlying principle is also leveraged in existing methods such as MTS \cite{upasana2019maxima} and GDTS \cite{sp2022gradient}. At the end of each loop, the collected data is used to infer the source direction, which is subsequently used to switch the direction of the constant-curvature motion for the next loop. In existing approaches, this update is typically obtained via local gradient estimation of the signal field, which is highly sensitive to uncertainty. In contrast, the proposed method avoids explicit gradient estimation. The direction update is instead derived from the estimated source location provided by an EKF, together with a scalar information metric constructed from the measurements accumulated over the current loop.

\begin{figure}[t]
    \centering
    \includegraphics[width=0.5\linewidth]{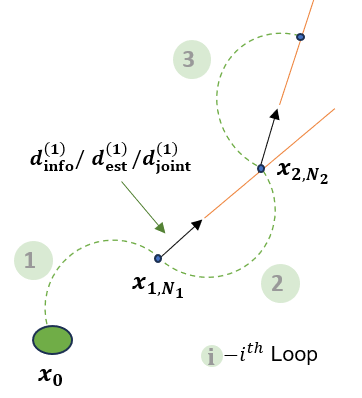}
    \caption{Loop-based source-seeking strategy with direction updated at the end of each loop (switching point).}
    \label{fig:method_overview}
\end{figure}




In the proposed strategy, the motion evolves as a sequence of constant-curvature loops indexed by \( i = 1, 2, \dots \). Let the \(i\)-th loop consist of \(N_i\) measurement steps, with the agent position at step \(k\) denoted by \( \mathbf{x}_{i,k} \), and the terminal (switching) point given by \( \mathbf{x}_{i,N_i} \). Within loop \(i\), for each measurement step \( k \in \{0,1,\dots,N_i\} \), the agent acquires scalar measurement $z_{i,k}$. These measurements along the loop are accumulated and used to compute a reference direction $\mathbf{d}^{(i)}$ for the subsequent loop.

The overall source-seeking strategy is outlined in the following section.



\subsection{Overall Strategy}

\textbf{Initial Loop:}
The agent initiates the exploration from an initial position $\mathbf{x}_0$ by executing a purely exploratory loop, as illustrated in Fig.~\ref{fig:method_overview}. During this loop, no prior directional information is available. The loop is therefore terminated at the point corresponding to the maximum measured signal, which is denoted as the first switching point $\mathbf{x}_{1,N_1}$. Using the measurements collected along the trajectory, a direction $\mathbf{d}^{(1)}$ is estimated, which determines the average direction of motion for the subsequent loop. 
In this paper, $\mathbf{d}^{(i)}$ is computed based on one of the following strategies. (i) an estimate-based strategy (see Section~\ref{sec:estimation_direction}),
(ii) a Fisher information-based strategy (see Section~\ref{sec:fim_direction}), or
(iii) a joint strategy (see Section~\ref{sec:joint_strategy_direction}), which balances estimation accuracy and information gain.

The agent then switches the direction of its motion accordingly and the loop counter $i$ is incremented.

\textbf{Subsequent Loops:}
Given the direction $\mathbf{d}^{(i-1)}$ at the previous switching point $\mathbf{x}_{i-1,N_{i-1}}$, the agent executes the next constant-curvature loop. The loop is terminated when the agent trajectory  
intersects the ray originating at $\mathbf{x}_{i-1,N_{i-1}}$ along direction $\mathbf{d}^{(i-1)}$. 
This intersection determines the next switching point $\mathbf{x}_{i,N_{i}}$, where a new direction $\mathbf{d}^{(i)}$ is estimated based on the accumulated measurements. The direction of agent's motion changes and the loop counter $i$ is incremented.



\textbf{Stopping Criterion:}
The iterative process continues until the measured signal at the switching points ceases to exhibit improvement across successive loops. Specifically, the algorithm is terminated when
\begin{equation}
z_{i,N_{i}} < z_{i-1,N_{i-1}},
\end{equation}
indicating convergence to the vicinity of the source.







\subsection{Estimation-Based Direction}
\label{sec:estimation_direction}

As described in the overall strategy, a direction $\mathbf{d}^{(i)}$ is computed at each switching point using the measurements accumulated over the preceding loop. In the estimation-based approach, this direction is denoted as $\mathbf{d}^{(i)}_{\mathrm{est}}$ and is derived from the estimate of the source location obtained from an EKF.

\subsubsection*{Extended Kalman Filter}


From \eqref{eq:source_motion} and \eqref{eq:measurement}, at each discrete measurement step $k$ within loop $i$, the source evolution model and measurement model are given as

\begin{align}
\mathbf{s}_{i,k} &= f(\mathbf{s}_{i,k-1}) + \mathbf{w}_{i,k-1}, 
\quad \mathbf{w}_{i,k-1} \sim \mathcal{N}(\mathbf{0}, \mathbf{Q}), \\
z_{i,k} &= h(\mathbf{x}_{i,k}, \mathbf{s}_{i,k}) + v_{i,k}, 
\quad v_{i,k} \sim \mathcal{N}(0, R).
\end{align}

Let $\hat{\mathbf{s}}_{i,k}$ and $\mathbf{P}_{i,k}$ denote the estimate of the source location and its associated covariance at step $k$ of loop $i$. Given the nonlinear process and measurement models, EKF proceeds through the following prediction and update steps.

\textit{Prediction:}
\begin{equation}
\hat{\mathbf{s}}_{i,k|k-1} = f(\hat{\mathbf{s}}_{i,k-1}),
 \end{equation}
\begin{equation}
\mathbf{P}_{i,k|k-1} = \mathbf{F}_{i,k-1} \mathbf{P}_{i,k-1} \mathbf{F}_{i,k-1}^\top + \mathbf{Q},
\end{equation}

\textit{Update:}
\begin{equation}
\mathbf{K}_{i,k} = \mathbf{P}_{i,k|k-1} \mathbf{H}_{i,k}^\top 
\left(\mathbf{H}_{i,k} \mathbf{P}_{i,k|k-1} \mathbf{H}_{i,k}^\top + \mathbf{R}\right)^{-1},
\end{equation}
\begin{equation}
\hat{\mathbf{s}}_{i,k} = \hat{\mathbf{s}}_{i,k|k-1} + \mathbf{K}_{i,k}
\left(z_{i,k} - h(\mathbf{X}_{i,k}, \hat{\mathbf{s}}_{i,k|k-1})\right),
\end{equation}
\begin{equation}
\mathbf{P}_{i,k} = (\mathbf{I} - \mathbf{K}_{i,k}\mathbf{H}_{i,k}) \mathbf{P}_{i,k|k-1},
\end{equation}

where $\mathbf{F}_{i,k-1}$ is the Jacobian of $f(\cdot)$ evaluated at $\hat{\mathbf{s}}_{i,k-1}$, and $\mathbf{H}_{i,k}$ is the Jacobian of $h(\cdot)$ with respect to the source state, evaluated at $(\mathbf{x}_{i,k}, \hat{\mathbf{s}}_{i,k|k-1})$, and $\mathbf{K}_{i,k}$ denotes the Kalman gain at step $k$ of loop $i$ and $\mathbf{x}_{i,k}$ represents the agent's position.

\subsubsection*{Estimate-Based Direction $\mathbf{d}^{(i)}_{\mathrm{est}}$}

At the switching point $\mathbf{x}_{i,N_i}$, the direction $\mathbf{d}^{(i)}_{\mathrm{est}}$ is defined based on the relative geometry between the agent the estimated source location.

Define the true relative position vector and its estimate as
\begin{equation}
\mathbf{r}_{i,N_i} \triangleq \mathbf{s}_{i,N_i} - \mathbf{x}_{i,N_i}, 
\quad
\hat{\mathbf{r}}_{i,N_i} \triangleq \hat{\mathbf{s}}_{i,N_i} - \mathbf{x}_{i,N_i}.
\end{equation}
The estimation error is given by
\begin{equation}
\mathbf{e}_{i,N_i} = \hat{\mathbf{s}}_{i,N_i} - \mathbf{s}_{i,N_i},
\end{equation}
which implies
\begin{equation}
\hat{\mathbf{r}}_{i,N_i} = \mathbf{r}_{i,N_i} + \mathbf{e}_{i,N_i}.
\end{equation}
The estimate-based direction for loop $i$ is then given by
\begin{equation}
\mathbf{d}_{\text{est}}^{(i)} =
\frac{\hat{\mathbf{r}}_{i,N_i}}{\|\hat{\mathbf{r}}_{i,N_i}\|}
=
\frac{\hat{\mathbf{s}}_{i,N_i} - \mathbf{x}_{i,N_i}}{\|\hat{\mathbf{s}}_{i,N_i} - \mathbf{x}_{i,N_i}\|}.
\label{eq:d_est}
\end{equation}
This $\mathbf{d}_{\mathrm{est}}^{(i)}$ will be used to terminate the subsequent loop and switch the agent's motion.

\subsubsection*{Results}
The estimate-based direction defined in \eqref{eq:d_est} depends explicitly on the EKF estimate at the switching points. This section characterize how $\mathbf{d}_{\text{est}}^{(i)}$ evolves across loops as the estimation improves.

\begin{lemma}[Loop-wise Improvement] \label{lem:Lemma_1}
    The estimate-based direction improves monotonically across loops 
    \begin{equation}
    \mathbb{E}\!\left[\|\mathbf{d}_{\text{est}}^{(i+1)} - \mathbf{d}^{*(i+1)}\|^2\right]
    <
    \mathbb{E}\!\left[\|\mathbf{d}_{\text{est}}^{(i)} - \mathbf{d}^{*(i)}\|^2\right].
    \end{equation}
    where $\mathbf{d}^{*(i)}$ denotes the true direction toward the source at the end of loop $i$, defined as
    \begin{equation}
    \mathbf{d}^{*(i)} = \frac{\mathbf{r}_{i,N_i}}{\|\mathbf{r}_{i,N_i}\|}, 
    \quad \mathbf{r}_{i,N_i} = \mathbf{s}_{i,N_i} - \mathbf{x}_{i,N_i}.
    \end{equation}
\end{lemma}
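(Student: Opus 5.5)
The plan is to reduce the direction error to the EKF estimation error through a first-order (small-error) perturbation of the normalization map $\mathbf{r}\mapsto \mathbf{r}/\|\mathbf{r}\|$. Then the monotone decrease of the EKF covariance at switching points gives the claim. Since $\hat{\mathbf{r}}_{i,N_i}=\mathbf{r}_{i,N_i}+\mathbf{e}_{i,N_i}$, I will expand $g(\mathbf{r})=\mathbf{r}/\|\mathbf{r}\|$ about the true relative vector:
\begin{equation*}
\mathbf{d}_{\text{est}}^{(i)}-\mathbf{d}^{*(i)} = \frac{1}{\|\mathbf{r}_{i,N_i}\|}\left(\mathbf{I}-\mathbf{d}^{*(i)}\mathbf{d}^{*(i)\top}\right)\mathbf{e}_{i,N_i} + O\!\left(\frac{\|\mathbf{e}_{i,N_i}\|^2}{\|\mathbf{r}_{i,N_i}\|^2}\right).
\end{equation*}
This expansion is valid whenever $\|\mathbf{e}_{i,N_i}\|<\|\mathbf{r}_{i,N_i}\|$, i.e., the agent is not yet inside the estimation-uncertainty ball, which is where the stopping criterion is meant to act. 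Writing $\boldsymbol{\Pi}_i=\mathbf{I}-\mathbf{d}^{*(i)}\mathbf{d}^{*(i)\top}$ and assuming the EKF is approximately unbiased with error covariance $\mathbf{P}_{i,N_i}$, we get
\begin{equation*}
\mathbb{E}\!\left[\|\mathbf{d}_{\text{est}}^{(i)}-\mathbf{d}^{*(i)}\|^2\right]\approx \frac{\mathrm{tr}\left(\boldsymbol{\Pi}_i\mathbf{P}_{i,N_i}\boldsymbol{\Pi}_i\right)}{\|\mathbf{r}_{i,N_i}\|^2}=\frac{\mathbf{n}_i^\top\mathbf{P}_{i,N_i}\mathbf{n}_i}{\|\mathbf{r}_{i,N_i}\|^2},
\end{equation*}
where $\mathbf{n}_i$ is the unit normal to $\mathbf{d}^{*(i)}$ in the plane. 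So the directional error is the cross-range component of the covariance divided by the squared range.

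Next I will show that the numerator does not increase from loop $i$ to $i+1$. The covariance recursion over the $N_{i+1}$ steps of loop $i+1$ combines prediction and update. The update step gives $\mathbf{P}_{i,k}^{-1}=\mathbf{P}_{i,k|k-1}^{-1}+\mathbf{H}_{i,k}^\top R^{-1}\mathbf{H}_{i,k}$, so information is added at every step. Over a full circular loop around $\mathbf{x}$, the vectors $\mathbf{H}^\top\propto(\mathbf{x}_{i,k}-\hat{\mathbf{s}})$ span all directions, so the accumulated Fisher information $\sum_k \mathbf{H}^\top R^{-1}\mathbf{H}$ is positive definite. For a stationary source ($f=$ identity, $\mathbf{Q}=0$) this yields the strict Loewner decrease $\mathbf{P}_{i+1,N_{i+1}}\prec\mathbf{P}_{i,N_i}$, hence $\mathbf{n}^\top\mathbf{P}_{i+1,N_{i+1}}\mathbf{n}<\mathbf{n}^\top\mathbf{P}_{i,N_i}\mathbf{n}$ for every unit $\mathbf{n}$. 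For a moving source I would need to assume that the information gathered per loop dominates the inflation $\mathbf{F}\mathbf{P}\mathbf{F}^\top+\mathbf{Q}$ accumulated over the loop, stated as an explicit condition on loop length, $\mathbf{Q}$ and $\sigma$.

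The denominator should satisfy $\|\mathbf{r}_{i+1,N_{i+1}}\|\ge$ a quantity that keeps the ratio decreasing. This is the main obstacle: the agent is approaching the source, so $\|\mathbf{r}\|$ \emph{shrinks}, which works against the inequality. I would first try to show that the shrinking of $\|\mathbf{r}\|^2$ per loop is bounded by a factor dominated by the per-loop covariance contraction. To do this I would use the bounded displacement per loop (one loop of radius $u/\omega$) and the $\mathbf{H}$ magnitude $\propto \|\mathbf{r}\|e^{-\|\mathbf{r}\|^2/2\sigma^2}/\sigma^2$, which makes the information gain large in the mid-range. Failing that, I would state the lemma under the explicit hypothesis $\lambda_{\max}(\mathbf{P}_{i+1,N_{i+1}}\mathbf{P}_{i,N_i}^{-1})<\|\mathbf{r}_{i+1,N_{i+1}}\|^2/\|\mathbf{r}_{i,N_i}\|^2$. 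Under either route, the ratio is strictly smaller at loop $i+1$, and neglecting the higher-order term gives the stated inequality.
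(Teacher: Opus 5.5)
Your first two steps are the same as the paper's proof. You linearize $\mathbf{r}\mapsto\mathbf{r}/\|\mathbf{r}\|$ to get $\mathbb{E}\|\mathbf{d}_{\text{est}}^{(i)}-\mathbf{d}^{*(i)}\|^2\approx\operatorname{tr}(\mathbf{P}_\perp\mathbf{P}_{i,N_i})/\|\mathbf{r}_{i,N_i}\|^2$, and then use the strict Loewner decrease of the EKF covariance for the numerator.

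The genuine gap is the one you flag yourself and do not close: nothing in the proposal shows that the numerator contracts faster than $\|\mathbf{r}\|^2$ shrinks. Your fallback hypothesis $\lambda_{\max}(\mathbf{P}_{i+1,N_{i+1}}\mathbf{P}_{i,N_i}^{-1})<\|\mathbf{r}_{i+1,N_{i+1}}\|^2/\|\mathbf{r}_{i,N_i}\|^2$ is a direction-uniform restatement of the conclusion. That route therefore proves a conditional statement, not the lemma.

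Your first route cannot succeed uniformly in $i$ either. Since $\mathbf{H}_{i,k}=h(\mathbf{x}_{i,k},\hat{\mathbf{s}})\,(\mathbf{x}_{i,k}-\hat{\mathbf{s}})^\top/\sigma^2$ is small once the agent is close to the source, the per-loop contraction of $\mathbf{P}$ becomes negligible; with $\mathbf{Q}\neq\mathbf{0}$ the covariance may even grow. Meanwhile $\|\mathbf{r}\|^2$ keeps shrinking, so near the source the ratio will typically increase. This is consistent with the bearing to $\hat{\mathbf{s}}$ losing meaning inside the uncertainty region.

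The paper does not supply the missing idea. It closes with ``since $\|\mathbf{r}_{i+1,N_{i+1}}\|\le\|\mathbf{r}_{i,N_i}\|$ (Lemma 4), the result follows.'' That inequality gives $1/\|\mathbf{r}_{i+1,N_{i+1}}\|^2\ge 1/\|\mathbf{r}_{i,N_i}\|^2$, which works against the claim, exactly as you observed. The statement is only defensible under an explicit extra hypothesis that confines it to a regime where the agent is far from the source relative to its per-loop progress and the current uncertainty. You should state such a hypothesis rather than expect to derive it.

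There is a second, smaller gap, also shared with the paper, which uses the same $\mathbf{P}_\perp$ (built from $\mathbf{r}_{i,N_i}$) for both loops: the projection direction changes between switching points. Strict Loewner decrease gives $\mathbf{n}_{i+1}^\top\mathbf{P}_{i+1,N_{i+1}}\mathbf{n}_{i+1}<\mathbf{n}_{i+1}^\top\mathbf{P}_{i,N_i}\mathbf{n}_{i+1}$. But the loop-$i$ numerator is $\mathbf{n}_i^\top\mathbf{P}_{i,N_i}\mathbf{n}_i$. When $\mathbf{P}_{i,N_i}$ is anisotropic, the rotation from $\mathbf{n}_i$ to $\mathbf{n}_{i+1}$ can increase the quadratic form by more than the covariance update removes.

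Your fallback hypothesis only yields $\mathbf{P}_{i+1,N_{i+1}}\preceq\lambda_{\max}(\mathbf{P}_{i+1,N_{i+1}}\mathbf{P}_{i,N_i}^{-1})\,\mathbf{P}_{i,N_i}$, so it inherits the same problem. One sufficient condition that handles both issues at once is $\lambda_{\max}(\mathbf{P}_{i+1,N_{i+1}})\,\|\mathbf{r}_{i,N_i}\|^2<\lambda_{\min}(\mathbf{P}_{i,N_i})\,\|\mathbf{r}_{i+1,N_{i+1}}\|^2$. Alternatively, you could bound the rotation of $\mathbf{d}^{*}$ between switching points together with the conditioning of $\mathbf{P}_{i,N_i}$.
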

\begin{proof}
Using a first-order approximation of the normalized vector,
\begin{equation}
\mathbf{d}_{\text{est}}^{(i)} - \mathbf{d}^{*(i)}
\approx
\frac{1}{\|\mathbf{r}_{i,N_i}\|}\mathbf{P}_\perp \mathbf{e}_{i,N_i},
\end{equation}
where $\mathbf{P}_\perp = \mathbf{I} - \frac{\mathbf{r}_{i,N_i}\mathbf{r}_{i,N_i}^\top}{\|\mathbf{r}_{i,N_i}\|^2}$.

Thus,
\begin{equation}
\mathbb{E}\!\left[\|\mathbf{d}_{\text{est}}^{(i)} - \mathbf{d}^{*(i)}\|^2\right]
=
\frac{1}{\|\mathbf{r}_{i,N_i}\|^2}
\operatorname{tr}\!\left(\mathbf{P}_\perp \mathbf{P}_{i,N_i}\right).
\end{equation}

From the EKF information update,
\begin{equation}
\mathbf{P}_{i+1,N_{i+1}} \preceq \mathbf{P}_{i,N_i},
\end{equation}
Since $\mathbf{P}_\perp \succeq 0$ and the measurements provide non-zero information, the inequality is strict, which implies

\begin{equation}
\operatorname{tr}(\mathbf{P}_\perp \mathbf{P}_{i+1,N_{i+1}})
<
\operatorname{tr}(\mathbf{P}_\perp \mathbf{P}_{i,N_i}).
\end{equation}

Since $\|\mathbf{r}_{i+1,N_{i+1}}\| \le \|\mathbf{r}_{i,N_i}\|$ as shown in Section~\ref{sec:distance_reduction}, the result follows.
\end{proof}

\vspace{0.5em}

\begin{lemma}[Loop-wise Exponential Convergence] \label{lem:Lemma_2}
    Assume the EKF estimation error satisfies the mean-square exponential bound
    \begin{equation}
    \mathbb{E}\!\left[\|\mathbf{e}_k\|^2\right] \le C_0 \rho^k, 
    \quad 0 < \rho < 1,
    \end{equation}
    for all $k = 0,1,2,\dots$, where $C_0 > 0$ is a constant determined by the initial estimation error and noise statistics.
    
    Then, at the loop level, there exists a constant $C_1 \ge C_0$ such that
    \begin{equation}
    \mathbb{E}\!\left[\|\mathbf{e}_{i,N_i}\|^2\right] \le C_1 \rho^i,
    \end{equation}
    where $\mathbf{e}_{i,N_i} = \mathbf{e}_{k_i}$ denotes the estimation error at the switching point of loop $i$, and $k_i = \sum_{j=1}^{i} N_j$ is the cumulative time index.
    
    The direction error then satisfies
    \begin{equation}
    \mathbb{E}\!\left[\|\mathbf{d}_{\text{est}}^{(i)} - \mathbf{d}^{*(i)}\|^2\right]
    \le
    \frac{4C_1}{\|\mathbf{r}_{i,N_i}\|^2}\rho^i.
    \end{equation}
\end{lemma}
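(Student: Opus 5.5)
The plan has two parts. First convert the time-indexed bound into a loop-indexed one. Then convert estimation error into direction error using a deterministic geometric inequality.

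For the loop-level bound, specialise the hypothesis to the switching instants $k=k_i=\sum_{j=1}^{i}N_j$. This gives
\begin{equation*}
\mathbb{E}\bigl[\|\mathbf{e}_{i,N_i}\|^2\bigr]\le C_0\rho^{k_i}.
\end{equation*}
Every loop contains at least one measurement step, so $N_j\ge 1$ and hence $k_i\ge i$. Since $0<\rho<1$, this yields $\rho^{k_i}\le\rho^i$, and the claim holds with $C_1=C_0$. Any larger constant also works, which covers the stated $C_1\ge C_0$. If the paper's convention starts the initial exploratory loop at index $0$, or lets $k_i$ count from $0$ so that $k_i\ge i-1$, I would absorb the one-step offset by taking $C_1=C_0/\rho\ge C_0$. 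This is presumably why the statement only asserts the existence of some $C_1\ge C_0$.

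For the direction error, I would avoid the first-order linearisation used in the proof of Lemma~\ref{lem:Lemma_1}, since it would only give an approximate inequality. Instead I would use the exact inequality for normalised vectors: for nonzero $\mathbf{a},\mathbf{b}$,
\begin{equation*}
\Bigl\|\frac{\mathbf{a}}{\|\mathbf{a}\|}-\frac{\mathbf{b}}{\|\mathbf{b}\|}\Bigr\|\le\frac{2\|\mathbf{a}-\mathbf{b}\|}{\|\mathbf{b}\|}.
\end{equation*}
To prove it, insert the intermediate vector $\mathbf{a}/\|\mathbf{b}\|$ and apply the triangle inequality. This gives
\begin{equation*}
\Bigl\|\frac{\mathbf{a}}{\|\mathbf{a}\|}-\frac{\mathbf{a}}{\|\mathbf{b}\|}\Bigr\|+\Bigl\|\frac{\mathbf{a}-\mathbf{b}}{\|\mathbf{b}\|}\Bigr\|.
\end{equation*}
The first term equals $\bigl|\|\mathbf{b}\|-\|\mathbf{a}\|\bigr|/\|\mathbf{b}\|$, which by the reverse triangle inequality is at most $\|\mathbf{a}-\mathbf{b}\|/\|\mathbf{b}\|$. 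Apply this with $\mathbf{a}=\hat{\mathbf{r}}_{i,N_i}$ and $\mathbf{b}=\mathbf{r}_{i,N_i}$, so that $\mathbf{a}-\mathbf{b}=\mathbf{e}_{i,N_i}$, and square. The result is the pointwise bound
\begin{equation*}
\|\mathbf{d}_{\text{est}}^{(i)}-\mathbf{d}^{*(i)}\|^2\le \frac{4\|\mathbf{e}_{i,N_i}\|^2}{\|\mathbf{r}_{i,N_i}\|^2}.
\end{equation*}
The factor $4$ in this bound is exactly the constant in the statement.

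Finally, take expectations and insert the loop-level bound. This step is clean if $\mathbf{r}_{i,N_i}$ is treated as given, i.e.\ conditioning on the agent and source trajectory up to the switching point, or regarding it as deterministic as the statement's form implies. The main delicate point is that $\mathbf{r}_{i,N_i}$ is in principle random, because the switching point depends on past estimates. I would handle this by stating the bound conditionally on the trajectory up to $k_i$, with the EKF error bound assumed to hold conditionally as well. I would also note explicitly that the bound is meaningful only while $\|\mathbf{r}_{i,N_i}\|>0$, i.e.\ before the agent reaches the source, so that $\hat{\mathbf{r}}_{i,N_i}=0$ or $\mathbf{r}_{i,N_i}=0$ is excluded.
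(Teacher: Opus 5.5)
Your proposal is correct and follows the paper's proof essentially step for step. You specialize the hypothesis to $k=k_i$, use $N_j\ge 1\Rightarrow k_i\ge i$ to get $\rho^{k_i}\le\rho^i$ (so $C_1=C_0$ already works), then apply $\|\mathbf{d}_{\text{est}}^{(i)}-\mathbf{d}^{*(i)}\|\le 2\|\mathbf{e}_{i,N_i}\|/\|\mathbf{r}_{i,N_i}\|$, square, and take expectations. The differences are minor: you actually prove the normalized-vector inequality, which the paper only invokes. You also address the randomness of $\mathbf{r}_{i,N_i}$ by conditioning, whereas the paper simply assumes $\|\mathbf{r}_{i,N_i}\|\ge r_{\min}>0$ and treats it as fixed when taking the expectation.
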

\begin{proof}
    From the definition of the switching index,
    \begin{equation}
    \mathbf{e}_{i,N_i} = \mathbf{e}_{k_i}, 
    \quad k_i = \sum_{j=1}^{i} N_j, \; N_j \ge 1.
    \end{equation}
    
    Using the EKF bound,
    \begin{equation}
    \mathbb{E}\!\left[\|\mathbf{e}_{i,N_i}\|^2\right]
    =
    \mathbb{E}\!\left[\|\mathbf{e}_{k_i}\|^2\right]
    \le
    C_0 \rho^{k_i}.
    \end{equation}
    
    Since $k_i \ge i$ and $0 < \rho < 1$, it follows that
    \begin{equation}
    \rho^{k_i} \le \rho^i,
    \end{equation}
    which implies
    \begin{equation}
    \mathbb{E}\!\left[\|\mathbf{e}_{i,N_i}\|^2\right] \le C_0 \rho^i.
    \end{equation}
    
    Thus, there exists a loop-level constant $C_1 \ge C_0$ such that
    \begin{equation}
    \mathbb{E}\!\left[\|\mathbf{e}_{i,N_i}\|^2\right] \le C_1 \rho^i.
    \end{equation}
    
    Next, using the bound on normalized vectors,
    \begin{equation}
    \|\mathbf{d}_{\text{est}}^{(i)} - \mathbf{d}^{*(i)}\|
    \le
    \frac{2\|\mathbf{e}_{i,N_i}\|}{\|\mathbf{r}_{i,N_i}\|},
    \end{equation}
    we obtain
    \begin{equation}
    \|\mathbf{d}_{\text{est}}^{(i)} - \mathbf{d}^{*(i)}\|^2
    \le
    \frac{4\|\mathbf{e}_{i,N_i}\|^2}{\|\mathbf{r}_{i,N_i}\|^2}.
    \end{equation}
    
    Assuming $\|\mathbf{r}_{i,N_i}\| \ge r_{\min} > 0$, the bound remains well-defined.
    
    Taking expectation and substituting the loop-level bound,
    \begin{equation}
    \mathbb{E}\!\left[\|\mathbf{d}_{\text{est}}^{(i)} - \mathbf{d}^{*(i)}\|^2\right]
    \le
    \frac{4C_1}{\|\mathbf{r}_{i,N_i}\|^2}\rho^i.
    \end{equation}
\end{proof}

Lemmas~\ref{lem:Lemma_1} and \ref{lem:Lemma_2} establish that $\mathbf{d}^{(i)}_{\mathrm{est}}$ becomes progressively more accurate across loops, with the direction error decreasing monotonically and exhibiting exponential convergence under standard EKF assumptions.

\subsection{Fisher Information-Based Direction}
\label{sec:fim_direction}

While the estimation-based strategy exploits the current source estimate to guide motion, it does not explicitly account for the information content of future measurements. To address this, an information-driven strategy is proposed, wherein the agent motion is regulated to maximize the information gained about the source location.

\subsubsection*{Fisher Information and Direction Definition}
At each measurement step, the FIM associated with the source estimate is given by \cite{simon2006optimal}
\begin{equation}
\mathbf{J}_{i,k} = \mathbf{H}_{i,k}^\top \mathbf{R}^{-1} \mathbf{H}_{i,k},
\end{equation}
where $\mathbf{H}_{i,k}$ is the measurement Jacobian evaluated at $(\mathbf{x}_{i,k}, \hat{\mathbf{s}}_{i,k})$. Over loop $i$, the accumulated information is $\mathbf{J}^{(i)} = \sum_{k=0}^{N_i} \mathbf{H}_{i,k}^\top \mathbf{R}^{-1} \mathbf{H}_{i,k}.$

At the switching point $\mathbf{x}_{i,N_i}$, the information-based direction is defined as 
\begin{equation}
\mathbf{d}_{\text{info}}^{(i)} =
\frac{
\nabla_{\mathbf{x}} \, \mathrm{tr}\!\left(\mathbf{J}(\mathbf{x}_{i,N_i}, \hat{\mathbf{s}}_{i,N_i})\right)
}{
\left\|
\nabla_{\mathbf{x}} \, \mathrm{tr}\!\left(\mathbf{J}(\mathbf{x}_{i,N_i}, \hat{\mathbf{s}}_{i,N_i})\right)
\right\|
}.
\end{equation}
The gradient is evaluated numerically at $\mathbf{x}_{i,N_i}$ using $\hat{\mathbf{s}}_{i,N_i}$ .

\subsection{Direction Estimation Using Joint Strategy} \label{sec:joint_strategy_direction}
To balance exploitation (estimation) and exploration (information gain), a joint direction is defined as
\begin{equation}
\mathbf{d}_{\text{joint}}^{(i)} =
(1 - \lambda_i)\mathbf{d}_{\text{info}}^{(i)} + \lambda_i \mathbf{d}_{\text{est}}^{(i)},
\quad 0 \le \lambda_i \le 1,
\label{eq:d_joint}
\end{equation}
where the weighting factor $\lambda_i$ is chosen as
\begin{equation}
\lambda_i = \frac{\operatorname{tr}(\mathbf{P}_{i,N_i})}{\operatorname{tr}(\mathbf{P}_{i,N_i}) + c},
\label{eq:lambda}
\end{equation}
with $c > 0$ being a tuning parameter.

This choice ensures that when the estimation uncertainty is large (i.e., $\operatorname{tr}(\mathbf{P}_{i,N_i})$ is large), the strategy prioritizes information gathering, whereas as the estimate becomes more accurate, the direction increasingly aligns with the estimate-based strategy.

The following result characterizes the advantage of incorporating Fisher information into the direction selection.

\begin{lemma} \label{lem:Lemma_3}
    The joint direction $\mathbf{d}_{\text{joint}}^{(i+1)}$ yields strictly smaller expected direction error compared to $\mathbf{d}_{\text{est}}^{(i+1)}$.
    \begin{equation}
    \mathbb{E}\!\left[\|\mathbf{d}_{\text{joint}}^{(i+1)} - \mathbf{d}^{*(i+1)}\|^2\right]
    <
    \mathbb{E}\!\left[\|\mathbf{d}_{\text{est}}^{(i+1)} - \mathbf{d}^{*(i+1)}\|^2\right].
    \end{equation}
\end{lemma}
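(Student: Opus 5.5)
The comparison runs through loop $i+1$: the information direction $\mathbf{d}_{\text{info}}^{(i)}$ chosen at the end of loop $i$ shapes where the measurements of loop $i+1$ are taken, and hence how small $\mathbf{P}_{i+1,N_{i+1}}$ becomes. The plan is therefore to reuse the first-order reduction from the proof of Lemma~\ref{lem:Lemma_1}, so that every direction error becomes a quadratic form in the EKF covariance at the switching point. I would write $\mathbf{P}^{\text{joint}}_{i+1,N_{i+1}}$ and $\mathbf{P}^{\text{est}}_{i+1,N_{i+1}}$ for the covariances generated under the two policies, starting from the same estimator state at $\mathbf{x}_{i,N_i}$.

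\textbf{Step 1: decompose the joint error.} From \eqref{eq:d_joint},
\begin{equation}
\mathbf{d}_{\text{joint}}^{(i+1)}-\mathbf{d}^{*(i+1)} = \lambda_{i+1}\bigl(\mathbf{d}_{\text{est}}^{(i+1)}-\mathbf{d}^{*(i+1)}\bigr) + (1-\lambda_{i+1})\bigl(\mathbf{d}_{\text{info}}^{(i+1)}-\mathbf{d}^{*(i+1)}\bigr).
\end{equation}
Convexity of $\|\cdot\|^2$ then bounds the expected squared joint error by $\lambda_{i+1}E_{\text{est}}+(1-\lambda_{i+1})E_{\text{info}}$, with both terms evaluated under the joint policy. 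The error $E_{\text{est}}$ is approximately $\|\mathbf{r}\|^{-2}\operatorname{tr}(\mathbf{P}_\perp\mathbf{P}^{\text{joint}}_{i+1,N_{i+1}})$, exactly as in Lemma~\ref{lem:Lemma_1}.

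\textbf{Step 2: information comparison.} I would show $\mathbf{P}^{\text{joint}}_{i+1,N_{i+1}} \prec \mathbf{P}^{\text{est}}_{i+1,N_{i+1}}$. In information form, $\mathbf{P}^{-1}$ grows by the accumulated $\mathbf{J}^{(i+1)}=\sum_k \mathbf{H}^\top\mathbf{R}^{-1}\mathbf{H}$ over the loop. The joint direction has a positive component $(1-\lambda_i)>0$ along $\nabla_{\mathbf{x}}\operatorname{tr}\mathbf{J}$. To first order along the loop, this makes $\operatorname{tr}\mathbf{J}^{(i+1)}$ strictly larger than under the purely estimate-driven direction. For the Gaussian field \eqref{eq:field_model}, $\mathbf{H}$ has rank one and points along $\mathbf{x}-\hat{\mathbf{s}}$. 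Because the loop is circular and so spans varying bearings, the larger trace should translate into strictly larger information in the $\mathbf{P}_\perp$ subspace. Strict monotonicity of $\operatorname{tr}(\mathbf{P}_\perp\,\cdot)$ on the positive semidefinite cone then gives the strict inequality between the $E_{\text{est}}$ terms under the two policies.

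\textbf{Step 3: handle the $E_{\text{info}}$ term (main obstacle).} $\mathbf{d}_{\text{info}}$ is not an estimator of $\mathbf{d}^*$, so its error need not be small, and the convex combination could be worse than $\mathbf{d}_{\text{est}}$ alone. I would first try to control it through $\lambda_{i+1}$. By \eqref{eq:lambda}, $1-\lambda_{i+1}=c/(\operatorname{tr}\mathbf{P}+c)$. Hence the contribution $(1-\lambda_{i+1})E_{\text{info}}\le 4(1-\lambda_{i+1})$ must be dominated by the variance reduction gained in Step 2. This reduction scales with $\operatorname{tr}\mathbf{P}$ when uncertainty is large, so the argument works in the regime of large uncertainty. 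Alternatively, near the source the gradient of $\operatorname{tr}\mathbf{J}$ for a Gaussian field is radially aligned, and $\mathbf{d}_{\text{info}}$ is roughly parallel to $\pm\mathbf{d}_{\text{est}}$. One could then argue that $E_{\text{info}}$ is comparable to $E_{\text{est}}$. Choosing $c$ suitably small relative to the information gain would close the strict inequality. I expect this balancing, which realistically requires an assumption on $c$ or a first-order approximation regime like that in Lemma~\ref{lem:Lemma_1}, to be the delicate step.
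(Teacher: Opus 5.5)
\textbf{Gap in Step 3 (and Step 2).} Step 3 is a genuine gap, and neither of your ways of closing it works. In fact, for the paper's own Gaussian field the trade-off you hope to win is not available at all.

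With $\mathbf{H}=h\,(\mathbf{x}-\hat{\mathbf{s}})^\top/\sigma^2$ and $\hat{\mathbf{r}}=\hat{\mathbf{s}}-\mathbf{x}$, one has $\operatorname{tr}\mathbf{J}(\mathbf{x},\hat{\mathbf{s}})=\frac{A^2}{R\sigma^4}\|\hat{\mathbf{r}}\|^2e^{-\|\hat{\mathbf{r}}\|^2/\sigma^2}$. Its $\mathbf{x}$-gradient is $-\frac{2A^2}{R\sigma^4}\bigl(1-\|\hat{\mathbf{r}}\|^2/\sigma^2\bigr)e^{-\|\hat{\mathbf{r}}\|^2/\sigma^2}\,\hat{\mathbf{r}}$. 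Hence $\mathbf{d}_{\text{info}}^{(i)}=\operatorname{sgn}\bigl(\|\hat{\mathbf{r}}_{i,N_i}\|-\sigma\bigr)\,\mathbf{d}_{\text{est}}^{(i)}$ exactly.

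On realizations with $\|\hat{\mathbf{r}}\|>\sigma$ at the switching points, $\mathbf{d}_{\text{joint}}=\mathbf{d}_{\text{est}}$. The two policies then produce identical loops, covariances and directions, so the strict covariance gap in your Step 2 has nothing to come from. When $\|\hat{\mathbf{r}}\|<\sigma$, $\mathbf{d}_{\text{joint}}=(2\lambda-1)\mathbf{d}_{\text{est}}$, and a direct computation gives
\[
\|\mathbf{d}_{\text{joint}}-\mathbf{d}^{*}\|^2-\|\mathbf{d}_{\text{est}}-\mathbf{d}^{*}\|^2=4(1-\lambda)\bigl(\mathbf{d}_{\text{est}}^\top\mathbf{d}^{*}-\lambda\bigr).
\]
This is positive whenever $\mathbf{d}_{\text{est}}^\top\mathbf{d}^{*}>\lambda$, i.e.\ whenever the estimate is decent. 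Normalizing $\mathbf{d}_{\text{joint}}$ does not help, since it then equals $\operatorname{sgn}(2\lambda-1)\mathbf{d}_{\text{est}}$.

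Your three fallbacks therefore fail as follows.
\begin{itemize}
\item Near-source alignment: the collinearity is exact everywhere, but inside radius $\sigma$ the information gradient points away from $\hat{\mathbf{s}}$. So $E_{\text{info}}=2+2\,\mathbf{d}_{\text{est}}^\top\mathbf{d}^{*}$ is close to its maximum $4$. This happens precisely when $\operatorname{tr}\mathbf{P}$ is small and $1-\lambda$ is close to $1$.
\item Small $c$: this only drives $\lambda\to1$, so $\mathbf{d}_{\text{joint}}\to\mathbf{d}_{\text{est}}$ and the gap tends to zero rather than becoming strictly favorable. For a stationary source, $\operatorname{tr}\mathbf{P}$ eventually falls below any fixed $c$, returning you to $\lambda<1/2$.
\item Large uncertainty: this route uses the linearization $\|\mathbf{r}\|^{-2}\operatorname{tr}(\mathbf{P}_\perp\mathbf{P})$ of Lemma~\ref{lem:Lemma_1}. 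That needs $\operatorname{tr}\mathbf{P}\ll\|\mathbf{r}\|^2$, so it is unavailable exactly where you want to invoke it.
\end{itemize}

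\textbf{Comparison with the paper's proof.} The paper's proof is essentially your Step 2 alone. It asserts $\mathbf{J}^{(i+1)}_{\text{info}}\succeq\mathbf{J}^{(i+1)}_{\text{est}}$ on the grounds that $\mathbf{d}_{\text{info}}$ maximizes Fisher information. It then deduces $\mathbf{P}^{\text{info}}_{i+1,N_{i+1}}\preceq\mathbf{P}^{\text{est}}_{i+1,N_{i+1}}$ and a strict drop in $\operatorname{tr}(\mathbf{P}_\perp\mathbf{P})$. Finally it says the joint direction ``inherits'' this improvement because it has a non-zero information component. That last step evaluates the error of $\mathbf{d}_{\text{joint}}$ with the formula valid only for $\mathbf{d}_{\text{est}}$, i.e.\ it silently drops your $E_{\text{info}}$ term. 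Your Step 1 correctly exposes the term the paper ignores, but it also means the paper offers no argument you can lean on to finish.

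Step 2 is weak in both versions as well:
\begin{itemize}
\item The gradient of the pointwise trace at $\mathbf{x}_{i,N_i}$ gives neither a Loewner ordering of the accumulated $\mathbf{J}^{(i+1)}$ nor control of its part in the range of $\mathbf{P}_\perp$.
\item $\mathbf{P}_\perp$ and $\|\mathbf{r}_{i+1,N_{i+1}}\|$ are themselves policy-dependent, so the two quadratic forms are not directly comparable.
\end{itemize}

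A provable version would need different definitions, such as an information criterion whose gradient is not collinear with $\hat{\mathbf{r}}$ and a normalized joint direction. It would also need an explicit quantitative hypothesis relating $1-\lambda$, $E_{\text{info}}$ and the achieved covariance reduction.
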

\begin{proof}
From the estimate-based analysis,
\begin{equation}
\mathbb{E}\!\left[\|\mathbf{d}_{\text{est}}^{(i)} - \mathbf{d}^{*(i)}\|^2\right]
=
\frac{1}{\|\mathbf{r}_{i,N_i}\|^2}
\operatorname{tr}\!\left(\mathbf{P}_\perp \mathbf{P}_{i,N_i}\right).
\end{equation}

The EKF information update satisfies
\begin{equation}
\mathbf{P}_{i+1,N_{i+1}}^{-1}
=
\mathbf{P}_{i,N_i}^{-1} + \mathbf{J}^{(i+1)},
\end{equation}
where $\mathbf{J}^{(i+1)}$ depends on the chosen motion direction.

Since the information-driven direction maximizes the Fisher information, it yields
\begin{equation}
\mathbf{J}^{(i+1)}_{\text{info}} \succeq \mathbf{J}^{(i+1)}_{\text{est}}.
\end{equation}

Thus,
\begin{equation}
\mathbf{P}_{i+1,N_{i+1}}^{\text{info}}
\preceq
\mathbf{P}_{i+1,N_{i+1}}^{\text{est}},
\end{equation}
which implies
\begin{equation}
\operatorname{tr}\!\left(\mathbf{P}_\perp \mathbf{P}_{i+1,N_{i+1}}^{\text{info}}\right)
<
\operatorname{tr}\!\left(\mathbf{P}_\perp \mathbf{P}_{i+1,N_{i+1}}^{\text{est}}\right),
\end{equation}
and hence a smaller direction error. Since the joint direction includes a non-zero information-driven component, it achieves strictly greater information gain than the estimate-only strategy, thereby inheriting this improvement.
\end{proof}

\subsection{Convergence }
\label{sec:distance_reduction}

This subsection establishes that the proposed loop-based motion leads to progressive reduction in distance to the source, and provides insight into the role of different direction strategies.

\begin{lemma}[Loop-wise Distance Reduction] \label{lem:Lemma_4}
    The distance to the source decreases across successive switching points:
    \begin{equation}
    \|\mathbf{r}_{i+1,N_{i+1}}\| \le \|\mathbf{r}_{i,N_i}\|,
    \end{equation}
    where $\mathbf{r}_{i,N_i} = \mathbf{s}_{i,N_i} - \mathbf{x}_{i,N_i}$.    
\end{lemma}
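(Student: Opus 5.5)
I would prove Lemma~\ref{lem:Lemma_4} geometrically, by comparing consecutive switching points along the ray that defines the loop termination. First I treat the stationary case, $\mathbf{s}_{i+1,N_{i+1}}=\mathbf{s}_{i,N_i}=\mathbf{s}$, and later absorb source motion as a perturbation. By construction of the switching rule, the next switching point lies on the ray from the previous switching point along the selected direction:
\begin{equation*}
\mathbf{x}_{i+1,N_{i+1}} = \mathbf{x}_{i,N_i} + \ell_i \mathbf{d}^{(i)}, \quad \ell_i \ge 0 ,
\end{equation*}
where $\ell_i$ is the chord length of the constant-curvature loop between its start and the point where it recrosses the ray. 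Since the loop has fixed turn radius $u/\omega$, I will use $\ell_i \le \ell_{\max}=2u/|\omega|$.

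Next I expand the squared distance,
\begin{equation*}
\|\mathbf{r}_{i+1,N_{i+1}}\|^2 = \|\mathbf{r}_{i,N_i}\|^2 - 2\ell_i\, \mathbf{d}^{(i)\top}\mathbf{r}_{i,N_i} + \ell_i^2 ,
\end{equation*}
so it suffices to show $\ell_i \le 2\,\mathbf{d}^{(i)\top}\mathbf{r}_{i,N_i}$. Writing $\mathbf{d}^{(i)\top}\mathbf{r}_{i,N_i} = \|\mathbf{r}_{i,N_i}\|\cos\theta_i$, with $\theta_i$ the angle between $\mathbf{d}^{(i)}$ and $\mathbf{d}^{*(i)}$, the condition becomes $\cos\theta_i \ge \ell_{\max}/(2\|\mathbf{r}_{i,N_i}\|)$. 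For $\mathbf{d}_{\mathrm{est}}$, I would bound $\theta_i$ through $\|\mathbf{d}_{\mathrm{est}}^{(i)}-\mathbf{d}^{*(i)}\|^2 = 2-2\cos\theta_i$ and the bound of Lemma~\ref{lem:Lemma_2}. This yields the inequality in expectation, and with high probability once $C_1\rho^i$ is small relative to $\|\mathbf{r}_{i,N_i}\|^2$. For $\mathbf{d}_{\mathrm{joint}}$, note that the trace of $\mathbf{J}$ for the Gaussian field is a radial function of $\mathbf{x}-\hat{\mathbf{s}}$ whose gradient points toward $\hat{\mathbf{s}}$ whenever $\|\mathbf{x}-\hat{\mathbf{s}}\|>\sqrt{2}\sigma$. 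Hence $\mathbf{d}_{\mathrm{info}}$ and $\mathbf{d}_{\mathrm{est}}$ are aligned in that regime, and the convex combination in \eqref{eq:d_joint} inherits the angular bound. Inside the ball of radius $\sqrt{2}\sigma$, I would rely on the weight \eqref{eq:lambda} tending to one as $\operatorname{tr}(\mathbf{P})$ shrinks. The stopping criterion $z_{i,N_i}<z_{i-1,N_{i-1}}$ also helps here, since by monotonicity of $h$ it says that distance grew, so the iteration halts before any increase is recorded.

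The main obstacle is that the inequality cannot hold unconditionally. It fails when $\|\mathbf{r}_{i,N_i}\|<\ell_{\max}/2$, because the agent overshoots, and it fails when the direction error is large. I would therefore state the result under explicit standing assumptions: $\|\mathbf{r}_{i,N_i}\|\ge r_{\min}$, as already assumed in Lemma~\ref{lem:Lemma_2}, with $r_{\min}\ge \ell_{\max}/(2\cos\bar\theta)$, and a direction error bounded by $\bar\theta<\pi/2$. I would then argue that the distance is nonincreasing until the agent enters an $r_{\min}$-neighborhood, which is the neighborhood promised in the problem objective.

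For a moving source I add the displacement $\Delta\mathbf{s}_i=\mathbf{s}_{i+1,N_{i+1}}-\mathbf{s}_{i,N_i}$ and use the triangle inequality. The stationary-case decrease, of order $\ell_i\cos\theta_i - \ell_i^2/(2\|\mathbf{r}_{i,N_i}\|)$, must dominate $\|\Delta\mathbf{s}_i\|$. This requires a speed-ratio assumption: the source speed must be sufficiently smaller than $u$. I would state this as an extra assumption.
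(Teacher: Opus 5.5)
For the estimate-based direction your argument is the paper's argument made quantitative. Both expand $\|\mathbf{r}_{i,N_i}+\Delta\mathbf{s}^{(i)}-\Delta\mathbf{x}^{(i)}\|^2$ and use $\mathbf{r}_{i,N_i}^\top\mathbf{d}^{(i)}>0$ to make the cross term negative. Your version is better in two respects.
First, writing $\Delta\mathbf{x}^{(i)}=\ell_i\mathbf{d}^{(i)}$ with $\ell_i\le 2u/|\omega|$ shows that the $\|\Delta\mathbf{x}^{(i)}\|^2$ term, which the paper dismisses as higher order, is exactly what breaks the inequality near the source. So $\mathbf{r}_{i,N_i}^\top\mathbf{d}_{\mathrm{est}}^{(i)}>0$ alone is not enough, and your condition $\|\mathbf{r}_{i,N_i}\|\cos\theta_i\ge\ell_{\max}/2$, with its $r_{\min}$-neighbourhood, is genuinely needed.
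Second, the paper's relative bound $\|\Delta\mathbf{s}^{(i)}\|\le\epsilon\|\mathbf{r}_{i,N_i}\|$ does not make $2\mathbf{r}_{i,N_i}^\top\Delta\mathbf{s}^{(i)}$ negligible against $2\ell_i\|\mathbf{r}_{i,N_i}\|\cos\theta_i$ once $\|\mathbf{r}_{i,N_i}\|\gg\ell_i/\epsilon$. An absolute per-loop comparison like yours is the right form.

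The step that would fail is your extension to $\mathbf{d}_{\mathrm{joint}}$; note that the paper's proof treats only $\mathbf{d}_{\mathrm{est}}$. Let $\eta=\|\mathbf{x}-\hat{\mathbf{s}}\|$. Then $\operatorname{tr}\mathbf{J}=\frac{A^2}{R\sigma^4}\eta^2e^{-\eta^2/\sigma^2}$ has an exactly radial gradient. Hence $\mathbf{d}_{\mathrm{info}}=\mathbf{d}_{\mathrm{est}}$ for $\eta>\sigma$ (your $\sqrt{2}\sigma$ is valid but not sharp), and $\mathbf{d}_{\mathrm{info}}=-\mathbf{d}_{\mathrm{est}}$ for $\eta<\sigma$.
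Inside that ball $\mathbf{d}_{\mathrm{joint}}=(2\lambda_i-1)\mathbf{d}_{\mathrm{est}}$. With \eqref{eq:lambda} as written, $\lambda_i\to0$, not $1$, as $\operatorname{tr}(\mathbf{P}_{i,N_i})\to0$; you followed the paper's prose, which contradicts its formula. So once $\operatorname{tr}(\mathbf{P}_{i,N_i})<c$, the joint ray points away from $\hat{\mathbf{s}}$, $\theta_i\approx\pi$, and the distance grows by about $\ell_i$ — precisely in the converged regime.
To repair this you need one of the following:
\begin{itemize}
\item $\lambda_i>1/2$ inside the ball, i.e.\ $\operatorname{tr}(\mathbf{P}_{i,N_i})>c$, or the reversed weight $c/(\operatorname{tr}\mathbf{P}_{i,N_i}+c)$;
\item $r_{\min}\gtrsim\sigma+\|\mathbf{e}_{i,N_i}\|$, which is about $14$ m in the simulations versus the $1$ m overshoot scale.
\end{itemize}

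There are also three smaller points.
\begin{itemize}
\item For a moving source, a speed-ratio assumption alone is not uniform. The chord $\ell_i=(2u/\omega)\sin(\omega T_i/2)\to0$ as the loop duration $T_i$ approaches a full turn, while $\|\Delta\mathbf{s}_i\|$ grows like $T_i$. You must also bound the turning angle per loop away from $2\pi$.
\item Lemma~\ref{lem:Lemma_2} controls only $\mathbb{E}[1-\cos\theta_i]$, whereas your sufficient condition is pathwise and $\ell_i$ is coupled to $\theta_i$. So it gives a Markov-type high-probability statement, not the inequality in expectation.
\item The stopping rule cannot support the inequality. It fires only after an increase has already happened, and with noise $v$ the event $z_{i,N_i}<z_{i-1,N_{i-1}}$ neither implies nor is implied by a distance increase.
\end{itemize}
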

\begin{proof}
    Define the displacement between successive switching points as
    \begin{equation}
    \Delta \mathbf{x}^{(i)} = \mathbf{x}_{i+1,N_{i+1}} - \mathbf{x}_{i,N_i},
    \end{equation}
    and the source variation as
    \begin{equation}
    \Delta \mathbf{s}^{(i)} = \mathbf{s}_{i+1,N_{i+1}} - \mathbf{s}_{i,N_i}.
    \end{equation}
    
    Then,
    \begin{equation}
    \mathbf{r}_{i+1,N_{i+1}} = \mathbf{r}_{i,N_i} + \Delta \mathbf{s}^{(i)} - \Delta \mathbf{x}^{(i)}.
    \end{equation}
    
    Expanding the squared norm,
    \begin{align}
    \|\mathbf{r}_{i+1,N_{i+1}}\|^2
    &= \|\mathbf{r}_{i,N_i}\|^2 
    - 2\mathbf{r}_{i,N_i}^\top \Delta \mathbf{x}^{(i)}
    + \|\Delta \mathbf{x}^{(i)}\|^2 \nonumber \\
    &\quad + 2\mathbf{r}_{i,N_i}^\top \Delta \mathbf{s}^{(i)}
    - 2(\Delta \mathbf{s}^{(i)})^\top \Delta \mathbf{x}^{(i)}
    + \|\Delta \mathbf{s}^{(i)}\|^2.
    \end{align}
    
    The loop is oriented along the estimate-based direction
    \begin{equation}
    \mathbf{d}_{\text{est}}^{(i)} =
    \frac{\mathbf{r}_{i,N_i} + \mathbf{e}_{i,N_i}}{\|\mathbf{r}_{i,N_i} + \mathbf{e}_{i,N_i}\|},
    \end{equation}
    which satisfies
    \begin{equation}
    \mathbf{r}_{i,N_i}^\top \mathbf{d}_{\text{est}}^{(i)} > 0,
    \end{equation}
    for sufficiently small estimation error.
    
    By construction of the loop motion,
    \begin{equation}
    \mathbf{d}_{\text{est}}^{(i)\top} \Delta \mathbf{x}^{(i)} > 0
    \quad \Rightarrow \quad
    \mathbf{r}_{i,N_i}^\top \Delta \mathbf{x}^{(i)} > 0.
    \end{equation}
    
    Thus, the dominant term $-2\mathbf{r}_{i,N_i}^\top \Delta \mathbf{x}^{(i)}$ is strictly negative.
    
    Assuming slow source variation,
    \begin{equation}
    \|\Delta \mathbf{s}^{(i)}\| \le \epsilon \|\mathbf{r}_{i,N_i}\|, \quad \epsilon \ll 1,
    \end{equation}
    the remaining terms are higher-order and do not dominate.
    
    Hence,
    \begin{equation}
    \|\mathbf{r}_{i+1,N_{i+1}}\|^2 < \|\mathbf{r}_{i,N_i}\|^2,
    \end{equation}
    which implies
    \begin{equation}
    \|\mathbf{r}_{i+1,N_{i+1}}\| \le \|\mathbf{r}_{i,N_i}\|.
    \end{equation}

\end{proof}

\begin{remark}[Convergence under Joint Strategy] \label{Remark_TotalConvergence}
From Lemma~\ref{lem:Lemma_3}, the joint strategy achieves smaller direction error than the estimate-based direction, resulting in improved alignment with the true source direction. From Lemma~\ref{lem:Lemma_4}, the estimate-based strategy ensures monotonic reduction in distance to the source and eventual convergence.

Since the joint strategy provides better directional alignment while inheriting the convergence property of the EKF, it yields a greater reduction in distance per loop. Hence, as the EKF converges, the joint strategy also guarantees convergence to the source, with faster progression compared to estimate-only and information-only strategies.
\end{remark}

\section{Simulation Results}
\label{sec:results}

In this section, the proposed algorithm is validated through numerical simulations performed in the Google Colab environment. The simulations were executed on a system equipped with an Intel(R) Core(TM) i5-10500 CPU (3.10 GHz) processor. All simulation experiments have been carried out using Python-based implementations. A Gaussian scalar field is considered as below.
\begin{equation}
h(\mathbf{x}_{i,k}, \mathbf{s}_{i,k})
=
A \exp\left(-\frac{\|\mathbf{x}_{i,k} - \mathbf{s}_{i,k}\|^2}{2\sigma^2}\right),
\label{eq:simulation_model}
\end{equation}
where $\mathbf{x}_{i,k}, \mathbf{s}_{i,k} \in \mathbb{R}^2$ denote the agent and source positions, respectively. 
The source intensity is $A = 12.0$ unit, and the spatial spread parameter is $\sigma = 14.0~\text{m}$. 
Measurement noise is Gaussian, $v_{i,k} \sim \mathcal{N}(0, R)$ with $R = 0.01$. The true initial source location is $\mathbf{s}_{0} = [23,\,23]^\top~\text{m}$, and the EKF is initialized at $\hat{\mathbf{s}}_{0} = [15,\,15]^\top~\text{m}$ with covariance 
$\mathbf{P}_{0} = 30\,\mathbf{I}_2$. The agent follows constant-curvature motion with constant forward speed 
$u = 0.6~\text{m/s}$ and constant turn rate 
$\omega = 0.6~\text{rad/s}$, yielding a nominal loop radius of $1.0$ m.

While considering the motion regulation strategies, three different directions are considered for the initiation of turn switching at every nominal loop: 
(i) \textbf{FIM-only}, where the direction $\mathbf{d}_{\text{info}}^{(i)}$ is selected to maximize the information gain;
(ii) \textbf{Estimate-only}, where the direction $\mathbf{d}_{\text{est}}^{(i)}$ is selected towards the estimated source location; and, (iii) \textbf{joint strategy}, where the direction $\mathbf{d}_{\text{joint}}^{(i)}$ in \eqref{eq:d_joint} combines $\mathbf{d}_{\text{info}}^{(i)}$ and $\mathbf{d}_{\text{est}}^{(i)}$ using the adaptive weight $\lambda_i$ defined in \eqref{eq:lambda} that facilitates a balance between exploitation and exploration during the source localization process.

Three types of source motion are considered: 
(1) stationary source at $\mathbf{s}_0$; 
(2) straight-line motion with velocity $v_{\text{source}} = 0.10~\text{m/s}$ along the $x$-direction; 
(3) circular motion with $v_{\text{source}} = 0.10~\text{m/s}$ and angular rate $\omega_{\text{source}} = 0.01~\text{rad/s}$, giving a turn radius of $R$ as
$R_{\text{source}} = 10~\text{m}$.

\begin{figure*}[t]
\centering

\begin{subfigure}{0.32\textwidth}
    \centering
    \includegraphics[width=0.8\linewidth]{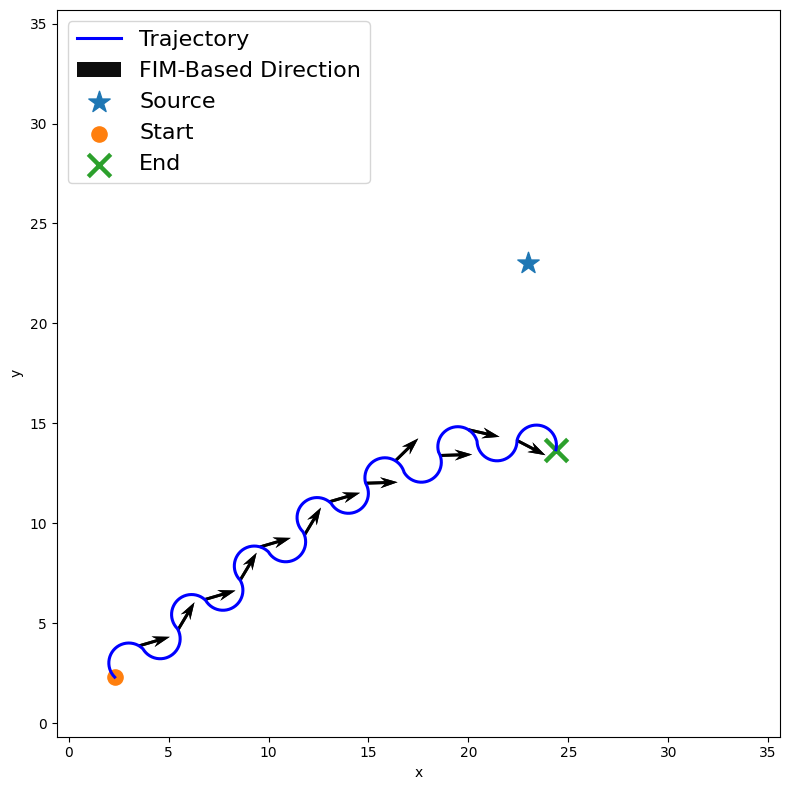}
    \caption{Stationary source — $\mathbf{d}_{\text{info}}^{(i)}$}
\end{subfigure}
\hfill
\begin{subfigure}{0.32\textwidth}
    \centering
    \includegraphics[width=0.8\linewidth]{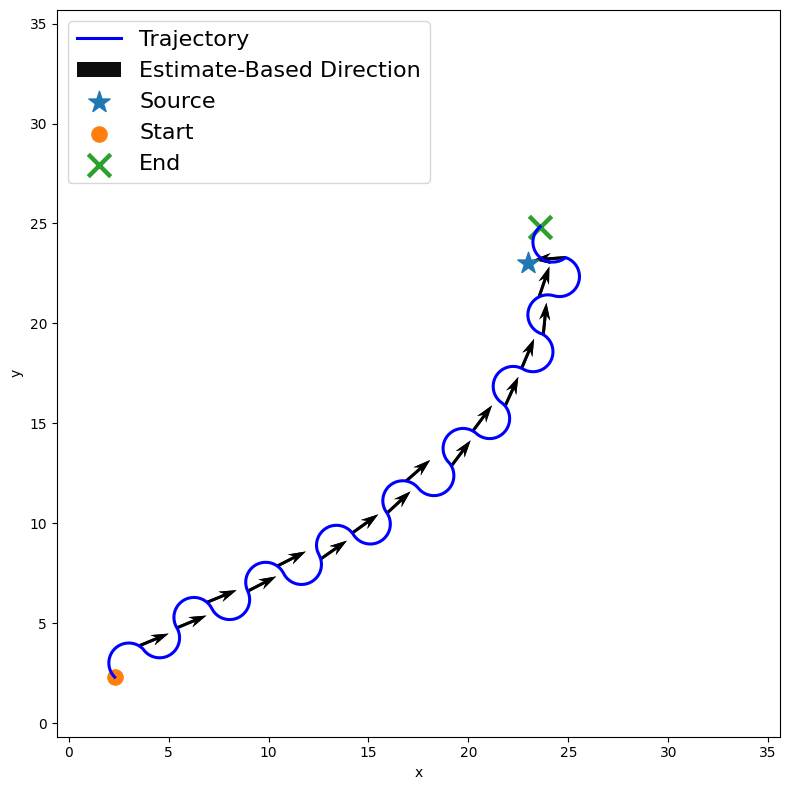}
    \caption{Stationary source — $\mathbf{d}_{\text{est}}^{(i)}$}
\end{subfigure}
\hfill
\begin{subfigure}{0.32\textwidth}
    \centering
    \includegraphics[width=0.8\linewidth]{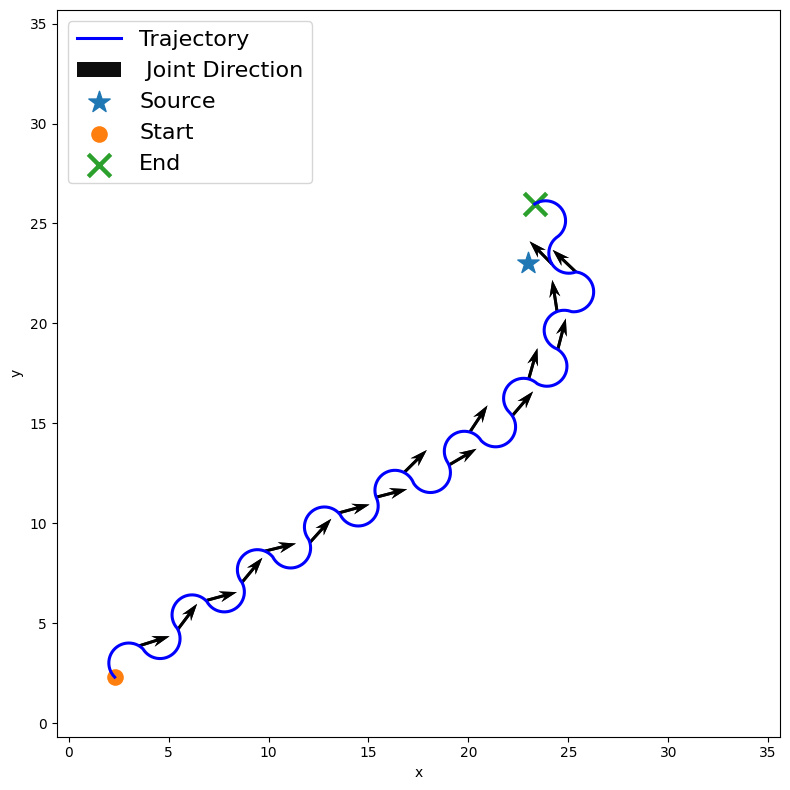}
    \caption{Stationary source — $\mathbf{d}_{\text{joint}}^{(i)}$}
\end{subfigure}

\vspace{0.35cm}

\begin{subfigure}{0.32\textwidth}
    \centering
    \includegraphics[width=0.8\linewidth]{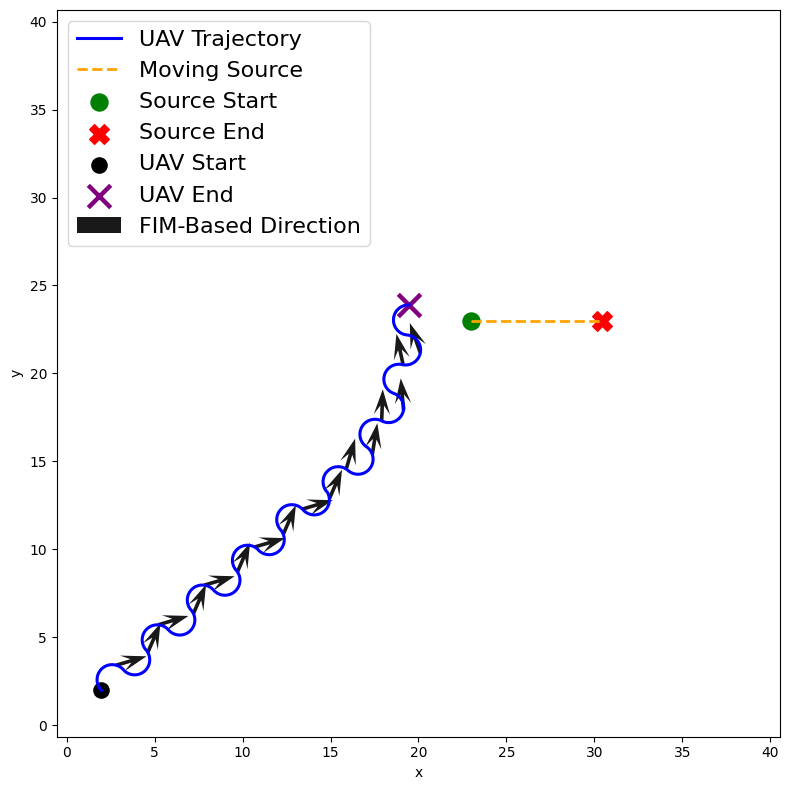}
    \caption{Non-maneuvering source — $\mathbf{d}_{\text{info}}^{(i)}$}
\end{subfigure}
\hfill
\begin{subfigure}{0.32\textwidth}
    \centering
    \includegraphics[width=0.8\linewidth]{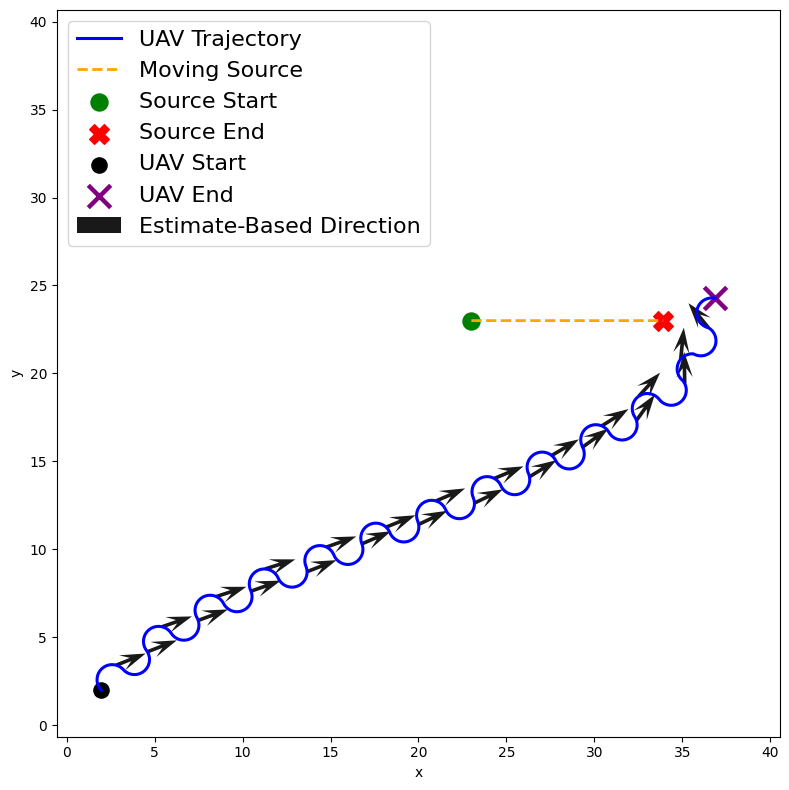}
    \caption{Non-maneuvering source — $\mathbf{d}_{\text{est}}^{(i)}$}
\end{subfigure}
\hfill
\begin{subfigure}{0.32\textwidth}
    \centering
    \includegraphics[width=0.8\linewidth]{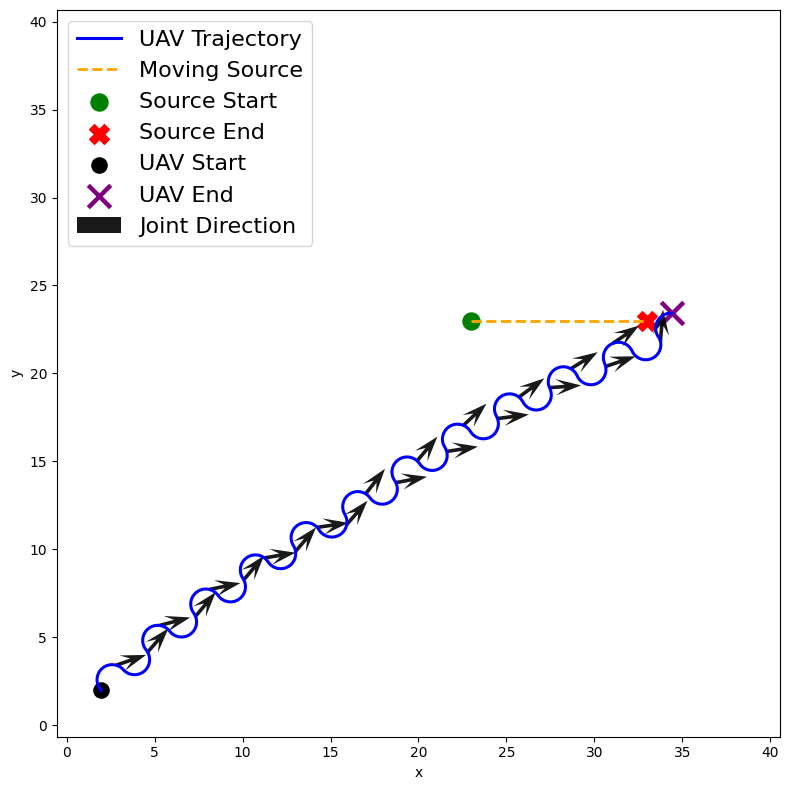}
    \caption{Non-maneuvering source — $\mathbf{d}_{\text{joint}}^{(i)}$}
\end{subfigure}

\vspace{0.35cm}

\begin{subfigure}{0.32\textwidth}
    \centering
    \includegraphics[width=0.8\linewidth]{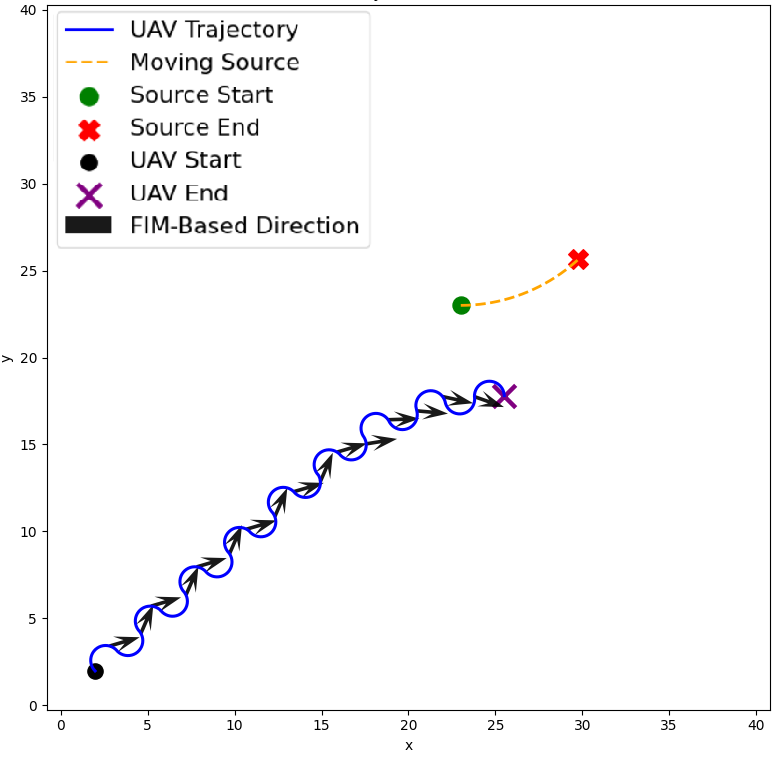}
    \caption{Maneuvering source — $\mathbf{d}_{\text{info}}^{(i)}$}
\end{subfigure}
\hfill
\begin{subfigure}{0.32\textwidth}
    \centering
    \includegraphics[width=0.8\linewidth]{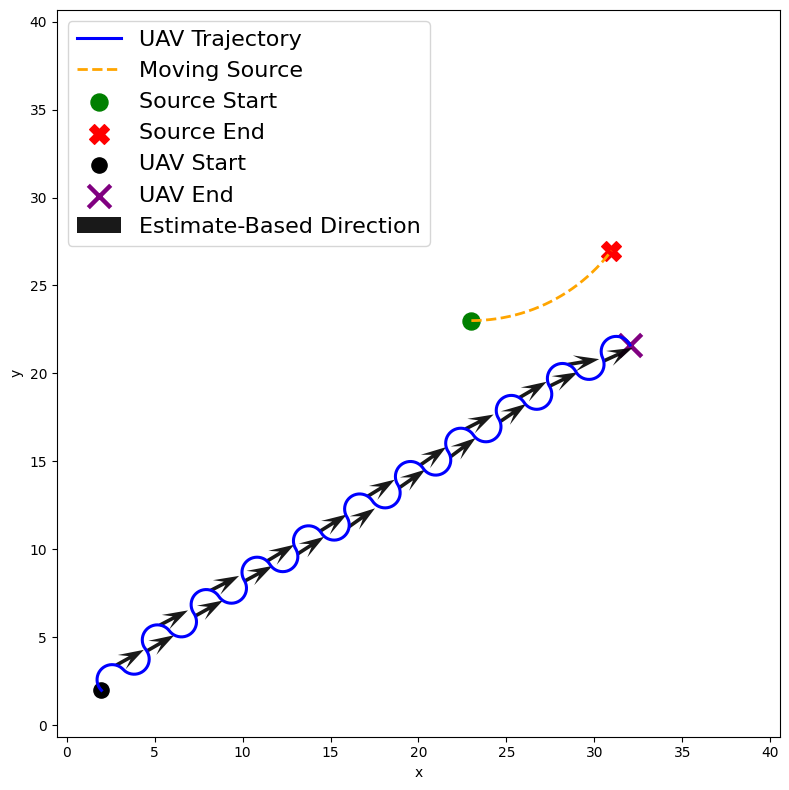}
    \caption{Maneuvering source — $\mathbf{d}_{\text{est}}^{(i)}$}
\end{subfigure}
\hfill
\begin{subfigure}{0.32\textwidth}
    \centering
    \includegraphics[width=0.8\linewidth]{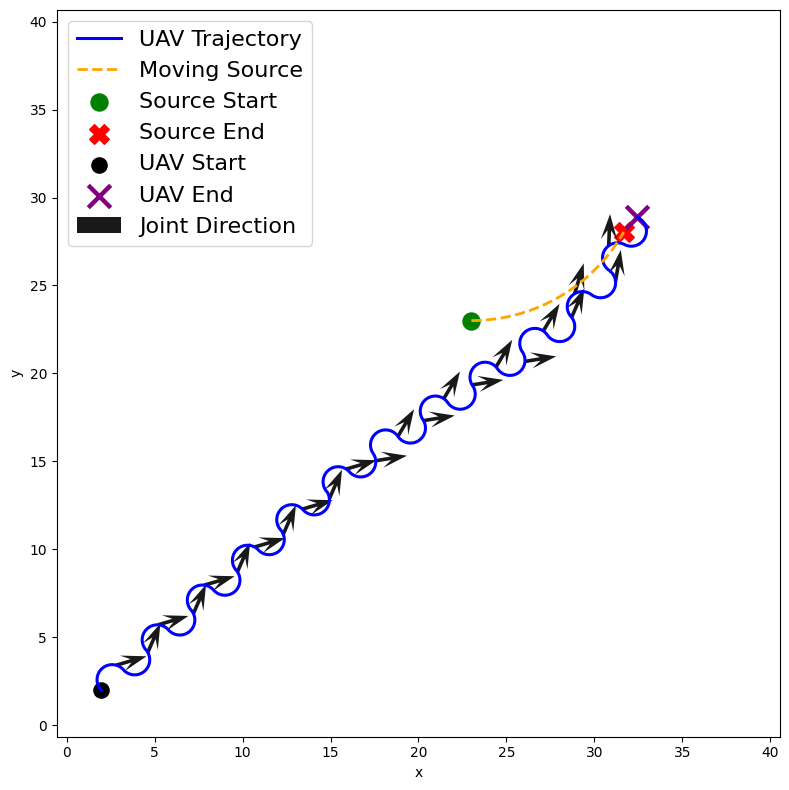}
    \caption{Maneuvering source — $\mathbf{d}_{\text{joint}}^{(i)}$}
\end{subfigure}

\caption{Trajectory comparison across three source dynamics (rows) and motion regulation strategies (columns). 
The information-driven $\mathbf{d}_{\text{info}}^{(i)}$ promotes exploration, while the estimate-driven $\mathbf{d}_{\text{est}}^{(i)}$ exploits the current source estimate. 
The joint strategy $\mathbf{d}_{\text{joint}}^{(i)}$ balances both, resulting in improved convergence, reduced tracking lag, and more stable trajectories across stationary and dynamic scenarios.}
\label{fig:trajectory_3x3}
\end{figure*}

\subsection{Trajectory Comparison}
\label{subsec:traj_comparison}

It is evident from Fig. \ref{fig:trajectory_3x3} that the turn switching strategy based on both $\mathbf{d}_{\text{info}}^{(i)}$, $\mathbf{d}_{\text{est}}^{(i)}$ and $\mathbf{d}_{\text{joint}}^{(i)}$ initially for a long duration steers the agent toward the source in cases of both stationary and dynamic source scenarios. However, as the information saturates after the agent's passage through the circular loops during that period, $\mathbf{d}_{\text{info}}^{(i)}$ starts oscillating with large deviation, thus resulting in poor performance in convergence toward the source at the end. On the other hand, as the source estimate improves over the circular loops, $\mathbf{d}_{\text{est}}^{(i)}$ gradually converges toward the source, thus leading the agent following $\mathbf{d}_{\text{est}}^{(i)}$-based turn switching strategy also converge sufficiently close to the source at the end. However, as $\mathbf{d}_{\text{est}}^{(i)}$ is a linear combination of both $\mathbf{d}_{\text{info}}^{(i)}$ and $\mathbf{d}_{\text{est}}^{(i)}$, it also oscillates, but with a less deviation w.r.t. $\mathbf{d}_{\text{info}}^{(i)}$. Moreover, as the weightage of $\mathbf{d}_{\text{est}}^{(i)}$ is more significant on $\mathbf{d}_{\text{joint}}^{(i)}$ at the end, it also finally converges toward the source with mild oscillatory pattern. This leads the agent driven by $\mathbf{d}_{\text{joint}}^{(i)}$-based turn switching strategy approach sufficiently close to the source.

In case of dynamic source scenarios, this pattern remains same for $\mathbf{d}_{\text{info}}^{(i)}$ and $\mathbf{d}_{\text{est}}^{(i)}$-based turn switching strategies. However, unlike stationary source scenario, the information about the source and its generated scalar field do not get saturated in dynamic source cases. Thus, the adaptive linear combination of $\mathbf{d}_{\text{info}}^{(i)}$ and $\mathbf{d}_{\text{est}}^{(i)}$ in $\mathbf{d}_{\text{joint}}^{(i)}$ facilitates in simultaneously exploring and exploiting even at the second half of the source localization mission, thus improving the convergence performance of the agent to the dynamic source's closer proximity over individual $\mathbf{d}_{\text{info}}^{(i)}$ and $\mathbf{d}_{\text{est}}^{(i)}$-based turn switching strategies.




\subsection{Estimation Error Evolution}
\label{subsec:error_analysis}

\begin{figure*}[t]
\centering

\begin{subfigure}{0.32\textwidth}
    \centering
    \includegraphics[width=0.8\linewidth]{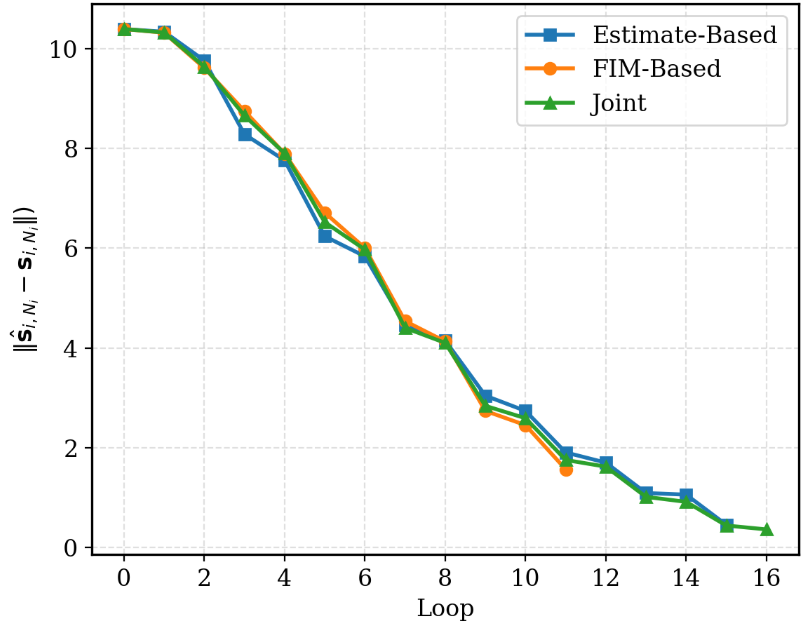}
    \caption{Stationary source}
\end{subfigure}
\hfill
\begin{subfigure}{0.32\textwidth}
    \centering
    \includegraphics[width=0.8\linewidth]{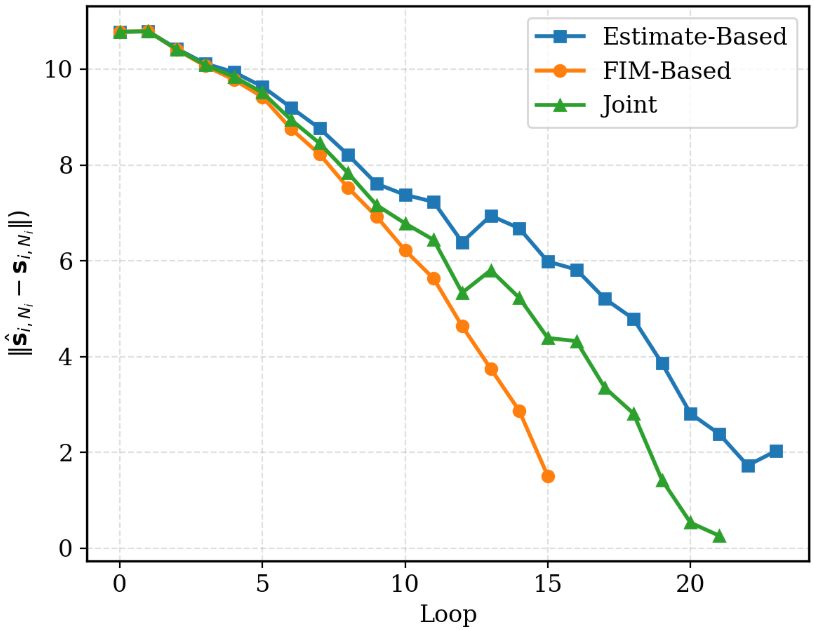}
    \caption{Non-maneuvering source}
\end{subfigure}
\hfill
\begin{subfigure}{0.32\textwidth}
    \centering
    \includegraphics[width=0.8\linewidth]{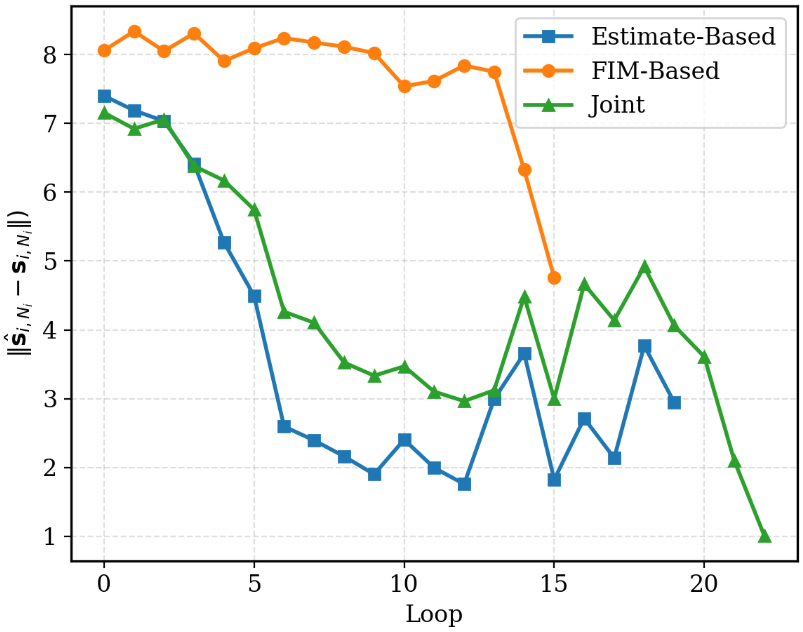}
    \caption{Maneuvering source}
\end{subfigure}

\vspace{0.1cm}

\caption{Estimation error $\left\lVert \hat{\mathbf{s}}_{i,N_i} - \mathbf{s}_{i,N_i} \right\rVert$ evaluated at switching points for stationary, non-maneuvering  and maneuvering source source dynamics. The joint strategy achieves faster convergence and lower terminal error across all cases.}
\label{fig:error_compare}
\end{figure*}

As observed in Fig. \ref{fig:error_compare}, across all source dynamics, the joint strategy achieves the lowest terminal estimation error. 
For the stationary case, the error reduces to $0.366~\mathrm{m}$, compared to $1.569~\mathrm{m}$ for $\mathbf{d}_{\text{info}}^{(i)}$ and $0.457~\mathrm{m}$ for $\mathbf{d}_{\text{est}}^{(i)}$. 
Under straight-line motion, it attains $0.265~\mathrm{m}$, outperforming $\mathbf{d}_{\text{info}}^{(i)}$ ($1.507~\mathrm{m}$) and $\mathbf{d}_{\text{est}}^{(i)}$ ($2.033~\mathrm{m}$). 
For circular motion, the joint strategy achieves $1.01~\mathrm{m}$, whereas $\mathbf{d}_{\text{info}}^{(i)}$ and $\mathbf{d}_{\text{est}}^{(i)}$ result in higher residual errors. 
These results indicate that the joint strategy improves estimation accuracy across both stationary and dynamic source scenarios.

\subsection{Distance-to-Source Evolution}

\begin{figure*}[t]
\centering

\begin{subfigure}{0.32\textwidth}
    \centering
    \includegraphics[width=0.8\linewidth]{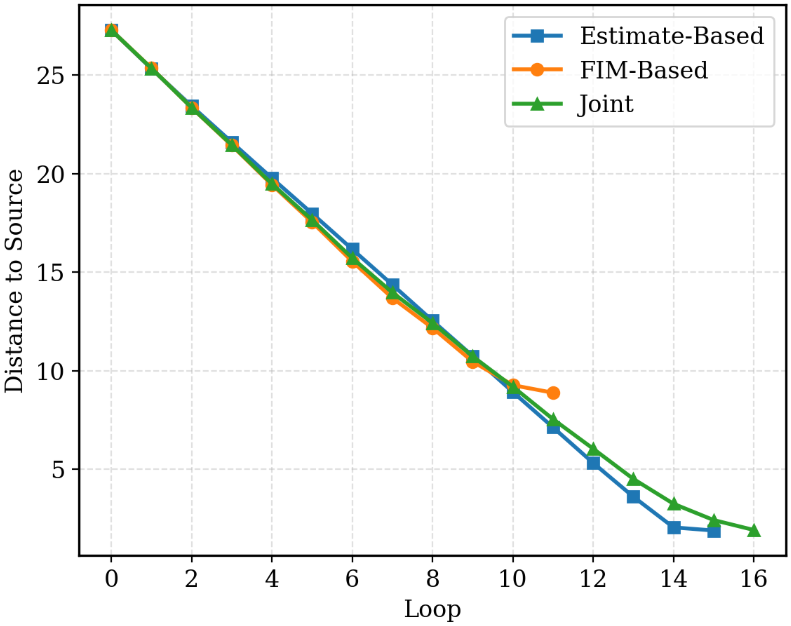}
    \caption{Stationary Source}
\end{subfigure}
\hfill
\begin{subfigure}{0.32\textwidth}
    \centering
    \includegraphics[width=0.8\linewidth]{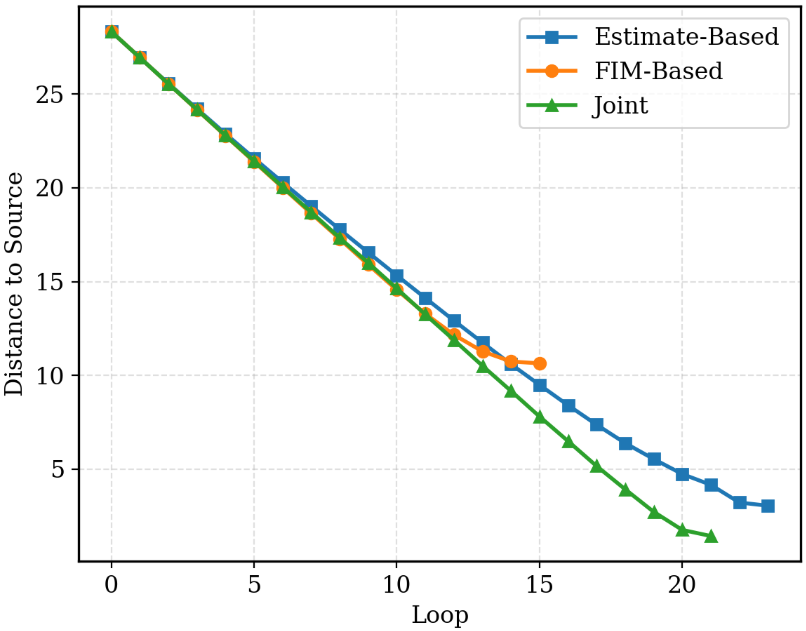}
    \caption{Non-maneuvering source}
\end{subfigure}
\hfill
\begin{subfigure}{0.32\textwidth}
    \centering
    \includegraphics[width=0.8\linewidth]{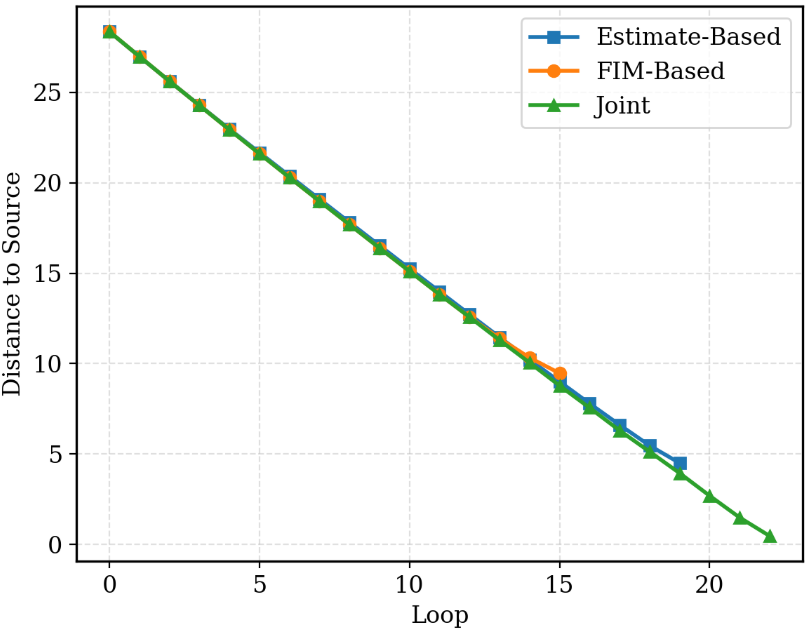}
    \caption{Maneuvering source}
\end{subfigure}

\vspace{0.1cm}

\caption{Distance to source $\|\mathbf{r}_{i,N_i}\|$ versus loop index $i$ for stationary, non-maneuvering and maneuvering source source dynamics. 
The joint strategy $\mathbf{d}_{\text{joint}}^{(i)}$ consistently yields faster convergence and reduced tracking lag compared to the information-driven $\mathbf{d}_{\text{info}}^{(i)}$ and estimate-driven $\mathbf{d}_{\text{est}}^{(i)}$ strategies.}
\label{fig:distance_compare}
\end{figure*}

Fig.~\ref{fig:distance_compare} compares the agent--source Euclidean distance (m) for all strategies. 
Across all source dynamics, the joint strategy $\mathbf{d}_{\text{joint}}^{(i)}$ shows a consistently faster reduction in distance. For all source scenarios, the $\mathbf{d}_{\text{info}}^{(i)}$-based turn switching strategy suffers at the end in terms of convergence, while both the $\mathbf{d}_{\text{est}}^{(i)}$-based and $\mathbf{d}_{\text{joint}}^{(i)}$-based strategies converge to sufficiently small neighborhood about the source. Moreover, across both stationary and dynamic source scenarios, the joint strategy exhibits not only a faster convergence to the source proximity, but also a closer approach toward the source.


These results indicate improved distance reduction performance with the joint strategy across both stationary and dynamic scenarios.

\begin{table}[t]
\centering
\caption{Terminal estimation error (m) for different source types using turn-switching directions.}
\begin{tabular}{lccc}
\hline
Source Type & \multicolumn{3}{c}{Turn-switching direction} \\
\cline{2-4}
 & \text{d\textsubscript{info}\textsuperscript{(i)}} 
 & \text{d\textsubscript{est}\textsuperscript{(i)}} 
 & \text{d\textsubscript{joint}\textsuperscript{(i)}} \\
\hline
Stationary      & 1.569 & 0.457 & \textbf{0.366} \\
Straight-line   & 1.507 & 2.033 & \textbf{0.265} \\
Circular        & 4.761 & 2.946 & \textbf{1.010} \\
\hline
\end{tabular}
\label{tab:terminal_error}
\end{table}


\section{Conclusion}
\label{sec:conclusion}

This paper presented a loop-based source-seeking framework that integrates EKF-based estimation with information-driven motion. 
The proposed joint strategy combines exploration and exploitation through an adaptive weighting, enabling reliable convergence toward the source.

The loop-based formulation enables direction updates based on aggregated information rather than noisy instantaneous data, ensuring smooth, curvature-bounded motion. This leads to stable and consistent progression toward the source.
Simulation results for both stationary and moving sources show that the joint strategy achieves lower terminal estimation error and more effective distance reduction compared to $\mathbf{d}_{\text{info}}^{(i)}$ and $\mathbf{d}_{\text{est}}^{(i)}$, particularly in dynamic scenarios.
Future work will focus on extending the framework to 3D scenarios and validating the approach through real-world experiments on agent platforms.

\bibliographystyle{IEEEtran}
\bibliography{refer}

\end{document}